\pgfplotsset{compat=1.18}
\newtheorem{Theorem}{Theorem}
\newtheorem{Definition}{Definition}
\newtheorem{Lemma}{Lemma}
\title{
Tail-Risk-Safe Monte Carlo Tree Search under PAC-Level Guarantees
}
\author{
    Zuyuan Zhang\\
    The George Washington University\\
    \texttt{zuyuan.zhang@gwu.edu}\\
    \And
    Arnob Ghosh\\
    New Jersey Institute of Technology\\
    \texttt{arnob.ghosh@njit.edu}\\
    \And
    Tian Lan\\
    The George Washington University\\
    \texttt{tlan@gwu.edu}
}
\begin{document}

\maketitle

\begin{abstract}
 Making decisions with respect to just the expected returns in Monte Carlo Tree Search (MCTS) cannot account for the potential range of high-risk, adverse outcomes associated with a decision. To this end, safety-aware MCTS often consider some constrained variants -- by introducing some form of mean risk measures or hard cost thresholds. These approaches fail to provide rigorous tail-safety guarantees with respect to extreme or high-
risk outcomes (denoted as tail-risk), potentially resulting in serious consequence in high-stake scenarios. This paper addresses the problem by developing two novel solutions. We first propose CVaR-MCTS, which embeds a coherent tail risk measure, Conditional Value-at-Risk (CVaR), into MCTS. Our CVaR-MCTS with parameter $\alpha$ achieves explicit tail-risk control over the expected loss in the "worst $(1-\alpha)\%$ scenarios." Second, we further address the estimation bias of tail-risk due to limited samples. We propose Wasserstein-MCTS  (or W-MCTS) by introducing a first-order Wasserstein ambiguity set $\mathcal{P}_{\varepsilon_{s}}(s,a)$ with radius $\varepsilon_{s}$ to characterize the uncertainty in tail-risk estimates. We prove PAC tail-safety guarantees for both CVaR-MCTS and W-MCTS and establish their regret.
Evaluations on diverse simulated environments demonstrate that our proposed methods outperform existing baselines, effectively achieving robust tail-risk guarantees with improved rewards and stability.
\end{abstract}

\section{Introduction}
Monte‑Carlo Tree Search (MCTS) has delivered state‑of‑the‑art performance in a variety of sequential decision‑making domains, including board games~\cite{gelly2006modification,silver2016mastering,silver2018general}, real‑time strategy games~\cite{schrittwieser2020mastering}, and robotic or classical planning tasks~\cite{chaslot2008progressive,luo2024survey,zhang2025lipschitz}.  
Its success stems from a principled balance of exploration and exploitation—typically realized via UCT‑style selection rules—together with sample‑efficient simulation, expansion, and back‑propagation phases.
Standard MCTS, however, optimizes \emph{expected} return only; it is blind to rare but catastrophic cost or safety.  Ignoring such \emph{tail-risk} cost can be disastrous in safety‑critical settings, e.g.\ autonomous driving~\cite{hubmann2017decision,bouton2019cooperation,zhang2025learning,chen2022cluster,yang2021stability,yu2025look,chen2024design,zou2024distributed,li2024crowdsensing,xu2023cnn} or medical decision support~\cite{hayes2022montecarlotreesearch,DBLP:conf/icml/Castellini0ZSFS23,zhang2024modeling}, where a single high‑impact failure cost may outweigh many high successes.  Addressing this limitation calls for MCTS variants that reason explicitly about the extreme end of the return distribution on the cost in addition to the average reward.

Recent work on \emph{constrained} Markov decision processes (CMDPs) seeks to maximise expected reward while keeping the long‑run \emph{average} cost below a prescribed threshold.  Although useful, this risk‑neutral criterion offers no protection against low‑probability, high‑impact losses—the tail of the cost distribution that matters most in safety‑critical settings.  A few papers have begun to study \emph{risk‑constrained} MDPs, but current algorithms lack finite‑time performance guarantees and, crucially, have not been integrated with MCTS. Within the MCTS literature itself, several so‑called \emph{Constrained MCTS} (C‑MCTS) variants have emerged:  
\textbf{Threshold‑UCT}~\cite{kurevcka2025threshold} heuristically prunes branches whose cumulative cost exceeds a hard cap;  
\textbf{Cost‑Constrained MCTS}~\cite{lee2018monte} trains an offline “safety critic’’ to veto risky actions; and  
\textbf{Distributional MCTS}~\cite{hayes2022montecarlotreesearch} maximizes a non‑linear utility over the return distribution.  
Though useful,  these solutions mostly focus on some form of mean cost or hard cost thresholds, and did not provide rigorous tail-risk guarantees while ensuring sub-linear regret. 
In this paper, we are focusing on the following question: {\em Can we achieve a sub-linear regret with provable tail-risk satisfaction guarantee for risk-constrained MCTS?}

Estimating tail‑risk metrics—such as Conditional Value‑at‑Risk (CVaR)—from a finite number of samples is intrinsically challenging, because CVaR depends on the sparsely populated extreme quantiles of the return distribution. On top of that, a viable algorithm must strike a careful balance between maximizing reward and enforcing the risk constraint, further compounding the difficulty.

\textbf{First}, we introduce \emph{CVaR--MCTS}, which incorporates the coherent tail--risk metric \textit{Conditional Value--at--Risk} (CVaR)~\cite{rockafellar2002conditional} into the UCT search routine.  
For a user--specified confidence level $\alpha \in (0,1)$, CVaR evaluates the \emph{expected} cost \emph{conditioned} on being in the worst\,$(1-\alpha)\%$ of outcomes:
\[
\operatorname{CVaR}_{\alpha}(Z)
 \;=\;
 \mathbb{E}\!\bigl[\,Z \mid Z \,\ge\, \operatorname{VaR}_{\alpha}(Z)\bigr],
\]
where $Z$ denotes the random return (or cost) and $\operatorname{VaR}_{\alpha}(Z)$ is the $\alpha$--quantile.  
A smaller~$\alpha$ probes deeper into the tail, producing a more conservative assessment.  
Embedding this criterion within MCTS yields decisions that explicitly control extreme losses rather than merely optimizing the mean.

CVaR--MCTS employs \emph{online Lagrangian dual updates} to balance exploration, exploitation, and adherence to the $\text{CVaR}_{\alpha}$ constraint, capping the expected loss in the worst $(1-\alpha)\,\%$ of outcomes.  
We show that for any $\delta\in(0,1)$, after $T$ node expansions
$
\Pr\!\bigl[\text{CVaR}_{\alpha}(Z_T)\le\text{Threshold}\bigr]\;\ge\;1-\delta,$
and the algorithm attains a regret of at most $\tilde{\mathcal O}(\sqrt{T})$, providing finite‑time tail‑safety with sub‑linear regret guarantee.  

Unfortunately, CVaR-MCTS still relies on quantile estimates from limited samples; in early exploration or when the environment drifts, estimation bias may lead to temporary constraint violations. To curb finite‑sample bias, we surround each empirical return distribution with a
first‑order Wasserstein ball
$
\mathcal{P}_{\varepsilon_s}(s,a)=\bigl\{\widetilde P:W_1(\widetilde P,\widehat P_{N(s)})\le\varepsilon_s\bigr\},
\quad
\varepsilon_s=\varepsilon_0/\sqrt{N(s)},$ where $N(s)$ is the number of times state $s$ is visited. We then optimize the \emph{worst‑case} CVaR within this set. We show that   
for any $\delta\in(0,1)$,
$
\Pr\!\Bigl[\,
\sup_{\widetilde P\in\mathcal{P}_{\varepsilon_s}}\!
\text{CVaR}_{\alpha}^{\widetilde P}(C_H)\le\boldsymbol{\tau}\Bigr]\ge1-\delta,$
so W‑MCTS remains tail‑safe under distributional shifts. Further,   
The algorithm satisfies
$\text{Regret}(T)=\tilde{\mathcal O}(L_C\varepsilon_0\sqrt{T}),
$  where $L_C$ is the Lipschitz constant dependent on the risk-measure. This tightens as $\varepsilon_s\!\downarrow$with more node visits, preserving
CVaR safety satisfaction while maintaining sub‑linear regret.

In summary, this paper advances state-of-the-art C-MCTS solutions from two aspects: (i) enabling MCTS with PAC tail-safety guarantees by CVaR-MCTS; and (ii) further improving the guarantee under finite
samples by W-MCTS with first-order Wasserstein
ambiguity sets. The results lay a solid foundation for safely deploying MCTS-based agents in high-stake scenarios. Extensive experiments across various benchmarks validate that our proposed CVaR-MCTS and W-MCTS effectively control tail-risk, consistently outperforming existing risk-sensitive methods in terms of both safety and efficiency.

\section{Background}

\noindent \textbf{Safe RL.}
Traditional reinforcement learning methods, such as Q-learning~\cite{watkins1992q}, DQN~\cite{mnih2013playing}, PPO~\cite{schulman2017proximal} and so on ~\cite{guo2023advantage, qiao2024br,zhang2024distributed,zhang2025network,gao2024cooperative,fang2023implementing,fang2024learning}, typically focus on reward maximization while ignoring safety factors present in real-world scenarios. While efficient risk-sensitive RLs \cite{wang2023near,wang2020reward,fei2020risk} have been proposed, they again focus on maximizing the tail-risk measures of the reward ignoring the safety constraints. 
To address this issue, the Constrained Markov Decision Process (CMDP) was proposed and has become the foundation for many SafeRL algorithms. Based on this framework, methods such as Constrained Policy Optimization (CPO)~\cite{achiam2017constrained}, Lyapunov-based RL (LRL)~\cite{chow2018lyapunov}, Primal-dual-based approaches \cite{ding2020natural,ghosh2022provably,efroni2020exploration,liu2021learning,wei2021provably}, and LP-based approaches \cite{efroni2020exploration} have been proposed.  Risk-constrained MDPs have also been considered \cite{chow2018risk,ghoshonline,ahmadi2021constrained,brazdil2020reinforcement,misra2023design}. Although these approaches have considered different safety constraints and risk measures, including CVaR~\cite{rockafellar2002conditional}, they mostly focus on value- or policy-based RL methods and do not directly apply to MCTS, which offers significant benefits in terms of sampling efficiency, online learning,  handling high-dimensional state spaces, and efficient exploration and exploitation. 

\textbf{Constrained-MCTS}
MCTS has been widely applied due to its superior performance in complex decision-making tasks such as chess and board games~\cite{silver2017mastering}. 
Constrained Monte Carlo Tree Search (C-MCTS) extends MCTS by incorporating constraint satisfaction during tree expansion, drawing on CMDP principles to improve safety. Prior methods—such as Multi-objective MCTS~\cite{wang2012multi}, POMDP-based MCTS~\cite{lee2018monte}, and Primal-Dual C-MCTS~\cite{parthasarathy2023c}—address constraints but typically focus on mean-risk or hard threshold formulations. These become unreliable under limited samples due to high variance or bias. In contrast, we propose a framework that provides finite-sample theoretical guarantees for tail-risk safety in MCTS. While~\cite{hayes2022montecarlotreesearch} explores risk-aware tree search, it lacks theoretical guarantees which we provide.

\section{Preliminaries}
\textbf{CMDP}
The Constrained Markov Decision Process (CMDP) extends the classical MDP 5-tuple $<\mathcal{S}, \mathcal{A}, \mathcal{P}, r, \gamma>$ by additionally introducing $K$ cost functions and their corresponding thresholds to capture the trade-off between reward maximization and safety, resource, or risk control. It is represented as the 7-tuple:$<\mathcal{S}, \mathcal{A}, \mathcal{P}, r, \{c^{(k)}\}_{k=1}^{K}, \gamma, \boldsymbol{\tau}>$
where $\mathcal{S}$ denotes the state space, $\mathcal{A}$ denotes the action space, $P(s' \mid s, a)$ is the transition function, and $\gamma \in [0,1]$ is the discount factor. The function $r: \mathcal{S} \times \mathcal{A} \rightarrow \mathbb{R}$ defines the reward function. The function $c^{(k)}: \mathcal{S} \times \mathcal{A} \rightarrow \mathbb{R}_{\geq 0}$ specifies the k-th cost function,  and $0 \leq c^{(k)}(s_t, a_t) \leq 1$, for $k = 1, \ldots, K.$ The vector $\boldsymbol{\tau} = (\tau_1, \tau_2, \ldots, \tau_K)^{T}$ gives the maximum allowable expected threshold for each cost. $c=\{c^{(k)}\}_{k=1}^K$ is the cost vector. 
Under policy $\pi$, within a planning horizon $H$, the undiscounted cumulative cost vector is defined as $C_H = \sum_{t=0}^{H-1} c(s_t, a_t).$  CMDP is then defined as $\max_{\pi}J_r^{\pi}\quad \text{s.t } \mathbb{E}_{\pi}[C_H]\leq \boldsymbol{\tau}$. Here, $J_r^{\pi}$ is the average expected reward, $\mathbb{E}_{\pi}[\mathbb{E}_{s\sim \rho} V_{\pi}(s)]$; $V^{\pi}(s)=\mathbb{E}_{\pi}[\sum_{t=1}^{\infty}\gamma^{t-1}r(s_t,a_t)|s_1=s]$, and $\rho$ is the initial state distribution. CMDP ensures that only the average cost is below a certain threshold. However, it does not provide guarantee on tail-distribution.

\noindent \textbf{CVaR}
To explicitly characterize the potential tail-risks that may arise in the decision process, Conditional Value-at-Risk (CVaR) has been proposed as a tail-risk measure. For a random variable $Z$ and a given confidence level $\alpha \in (0,1)$, the Value-at-Risk (VaR) is first defined as follows:

\begin{equation}
\begin{aligned}
    \text{VaR}_{\alpha}(Z) = \inf\{z | \mathbb{P}(Z\leq z)\geq \alpha\}
\end{aligned}
\end{equation}
Based on this, CVaR is further defined as follows:
\begin{equation}
\begin{aligned}
    \text{CVaR}_{\alpha}(Z) = \frac{1}{1-\alpha} \int_{\alpha}^{1}\text{VaR}_{u}(Z) du.
\end{aligned}
\end{equation}
VaR provides the quantile corresponding to the worst $(1-\alpha)\%$ scenarios, while CVaR further takes the expectation over this range, offering a continuous characterization of tail losses and making it more amenable to optimization.
If the risk dimension is $K$, the CVaR of $C_H$ is computed component-wise as $
\text{CVaR}_{\alpha}(C_H) = \bigl[\text{CVaR}_{\alpha}^{(1)}, \ldots, \text{CVaR}_{\alpha}^{(K)}\bigr]^{\top},$ and a vector constraint is imposed: $\text{CVaR}_{\alpha}(C_H)\leq \boldsymbol{\tau}.$

\subsection{CVaR-MCTS}

To effectively control the tail-risk constraint cost in the decision process, we integrate the CVaR risk measure with the Monte Carlo Tree Search (MCTS) algorithm and propose a CVaR- MCTS (CVaR-MCTS) algorithm framework. Due to page limitations, proofs of all theorems are presented in the appendix. 

First, we explicitly pose the problem as follows:  the goal is to maximize the expected return while maintaining the $\text{CVaR}_{\alpha}$ of the cost below the threshold $\boldsymbol{\tau}$ at state $s$ over the planning horizon $H$.
\begin{align}\label{eq:cvar-mcts}
\max_{\pi} J_r^{\pi}, \quad \text{s.t }
\text{CVaR}_{\alpha}(C_H(\pi)) \leq \boldsymbol{\tau}.
\end{align}
where $J_r^{\pi}=[\sum_{t=1}^H\gamma^{t-1}r_t(s_t,a_t)|s_1=s]$.

In order to solve this, we introduce a non-negative Lagrange multiplier $\boldsymbol{\lambda} \geq 0$ and construct the Lagrangian function as follows.
\begin{equation}
\begin{aligned}
    \mathcal{L}(\pi,\lambda) = J_{r}(\pi) - \sum_{k=1}^{K}\lambda_{k}(\text{CVaR}_{\alpha}^{(k)}\left(C_{H}(\pi))-\boldsymbol{\tau}_{k}\right)
\end{aligned}
\end{equation}
The above formulation indicates that, during the node selection phase, the algorithm not only takes into account the reward estimates and the exploration term, but also explicitly incorporates the CVaR tail risk constraints, thereby achieving a balance between reward and risk control. 

 For a fixed policy $\pi$, we derive the dual function with respect to $\boldsymbol{\lambda}$.
\begin{equation}
\begin{aligned}
    D(\boldsymbol{\lambda}) = \max_{\pi} \mathcal{L}(\pi,\boldsymbol{\lambda})
\end{aligned}
\end{equation}
Since for each policy $\pi$, the summation over $\boldsymbol{\lambda}$ is linear, $\boldsymbol{\lambda}$ is a convex function of $\mathcal{L}$. Moreover, the dual function $D(\boldsymbol{\lambda})$ is obtained by taking the pointwise maximum over all possible $\pi$, so $D(\boldsymbol{\lambda})$ must be a convex function. Therefore, the dual optimal solution is:
\begin{equation}
\begin{aligned}
    \boldsymbol{\lambda}^{*} = \arg\min_{\boldsymbol{\lambda\geq 0}} D(\boldsymbol{\lambda})
\end{aligned}
\end{equation}

Its gradient can be expressed as:
\begin{equation}
\begin{aligned}
    g(\boldsymbol{\lambda}) = \nabla D(\boldsymbol{\lambda}) = -(\text{CVaR}_{\alpha}(C_H(\pi^{*}(\lambda))) - \boldsymbol{\tau})
\end{aligned}
\end{equation}

where $\pi^{*}(\lambda)$ denotes the optimal policy that maximizes $\mathcal{L}$ given $\boldsymbol{\lambda}$.

Therefore, a dual update based on stochastic gradient ascent is naturally feasible.
At each internal node $s$ of the search tree, we design the following UCT-style selection rule

\begin{equation}
\label{eqn:cvar_ucb}
\begin{aligned}
U(s,a) = Q(s,a) + \beta_{R}\sqrt{\frac{\ln N(s)}{1+N(s,a)}} - \boldsymbol{\lambda}^{\top}_{s}\left(\widehat{\text{CVaR}_{\alpha}}(s,a) + \beta_{C}\sqrt{\frac{\ln N(s)}{1+ N(s,a)}}\textbf{1}- \textbf{B}_s\right)
\end{aligned}
\end{equation}
Here, $\widehat{\text{CVaR}_{\alpha}}(s,a)$ is the empirical tail-risk estimate obtained from the current set of roll-outs, while the factors premultiplied by $\beta_R$ and $\beta_C$ serve as optimism (upper-confidence) bonuses for the reward and cost terms, respectively. To guarantee feasibility with high probability, we enforce the constraint conservatively so that violations remain unlikely. Recall from~\eqref{eq:cvar-mcts} that the initial cost budget is $\tau$; as the episode progresses and costs accrue, the residual budget $\mathbf{B}_s$ is updated to reflect the remaining allowance at state $s$. {\em Note that the overall policy now depends on the available budget $\mathbf{B}_s$ which is consistent with approaches for risk-constrained MDP.}

To enable $\boldsymbol{\lambda}$ and $\mathbf{B}$ to gradually adapt to the true risk distribution during the search process, we employ online gradient ascent/descent dual updates:

\begin{equation}
\begin{aligned}
    \boldsymbol{\lambda}_{t+1} = \left[\boldsymbol{\lambda}_t + \eta_t g(\boldsymbol{\lambda})\right]_{+}
\end{aligned}
\end{equation}

In practical algorithms, we cannot compute $g(\boldsymbol{\lambda})$ exactly; instead, we construct a stochastic gradient using the CVaR estimate $\widehat{\text{CVaR}}_{\alpha}$ from a single Monte Carlo trajectory. At the same time, we update the existing budget $\mathbf{B}$. Accordingly, the updates can be written as:

\begin{equation*}
\begin{aligned}
    \boldsymbol{\lambda_{s}} \leftarrow \left[\boldsymbol{\lambda}_{s}+\eta_t (\widehat{\text{CVaR}}_{\alpha}- \textbf{B}_{s})\right]_{+}, \textbf{B}_{s}\leftarrow \textbf{B}_{s}-\widehat{\text{CVaR}}_{\alpha}(s,a)
\end{aligned}
\end{equation*}

The first rule ensures that when the estimated CVaR exceeds the current threshold $\mathbf{B}_s$, $\boldsymbol{\lambda}_{s}$ increases, thereby penalizing high-risk actions to a greater extent in subsequent selections; conversely, it decreases if the threshold is not exceeded. The second rule adjusts the budget $\mathbf{B}_s$ to dynamically track the actual level of sample risk and avoid being overly conservative or excessively relaxed.

\begin{algorithm} 
	\caption{CVaR-MCTS} 
	\label{alg3} 
	\begin{algorithmic}
		\STATE \textbf{Init:} $\boldsymbol{\lambda}_{\text{root}}\leftarrow \textbf{0}, \textbf{B}_{\text{root}}\leftarrow \boldsymbol{\tau}$
            \FOR{$t = 1,2,..., T$}
                \STATE \textbf{Select:} Follow the action with the largest $U(s,a)$ from the root node.
                \STATE \textbf{Expand:} Add the first-visited leaf node to the tree. 
                \STATE \textbf{Rollout:} Simulate according to policy $\pi$ and obtain the reward and cost.
                \STATE \textbf{Backprop:} update $Q,\widehat{\text{CVaR}},\boldsymbol{\lambda},\textbf{B}$
            \ENDFOR
	\end{algorithmic} 
\end{algorithm}

To ensure safety, we need to provide formal guarantees on the violation and convergence of our proposed CVaR-MCTS. Next, we present our theoretical analysis to establish such guarantees.

\begin{Theorem}[PAC-CVaR safety Guarantee]
\label{thm:cvar_safety}
For the one-step cost $c \in [0,1]$, $\beta_{C} = \sqrt{2}$, and empirical estimate satisfying $\widehat{\text{CVar}_{\alpha}}\leq \tau$,  we have with probability at least $1-\delta$:
\begin{equation*}
\begin{aligned}
    \text{CVaR}_{\alpha}(C_{H})\leq \boldsymbol{\tau}+\epsilon\mathbf{1} , \forall N \;\ge\;
\frac{2\beta_C^{2}}{(1-\alpha)^{2}}
\cdot
\frac{\ln(2K/\delta)}{\varepsilon^{2}}.
\end{aligned}
\end{equation*}
\end{Theorem}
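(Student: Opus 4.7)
The heart of the argument is a finite-sample concentration inequality that controls the gap between the empirical estimate $\widehat{\text{CVaR}_{\alpha}}$ and the true $\text{CVaR}_{\alpha}$ for one-step costs in $[0,1]$. The plan is to (i) show that this gap shrinks like $1/\sqrt{N}$ with the prescribed sample size, (ii) transport the bound to the $H$-horizon cumulative cost $C_{H}$ via the Lipschitz/coherence properties of CVaR, and (iii) combine with the UCB-type conservatism in the selection rule~\eqref{eqn:cvar_ucb} to conclude that the enforced empirical constraint $\widehat{\text{CVaR}_{\alpha}}\le\boldsymbol{\tau}$ implies the true constraint up to slack $\epsilon\mathbf{1}$.

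\textbf{Step 1 (single-coordinate CVaR concentration).} Using the Rockafellar--Uryasev dual representation
\[
\text{CVaR}_{\alpha}(Z)\;=\;\inf_{\nu\in\mathbb{R}}\Bigl\{\nu+\tfrac{1}{1-\alpha}\mathbb{E}[(Z-\nu)_{+}]\Bigr\},
\]
I would show that the functional $F\mapsto\text{CVaR}_{\alpha}(F)$ is $\tfrac{1}{1-\alpha}$-Lipschitz in the Kolmogorov (sup-norm) distance between CDFs when the support lies in $[0,1]$. Applying the DKW inequality (or a Hoeffding bound on the pivotal quantity at the optimal $\nu^{\star}$) yields
\[
\Pr\!\Bigl[|\widehat{\text{CVaR}_{\alpha}}^{(k)}-\text{CVaR}_{\alpha}^{(k)}|>\varepsilon\Bigr]\le 2\exp\!\Bigl(-\tfrac{N(1-\alpha)^{2}\varepsilon^{2}}{\beta_{C}^{2}}\Bigr),
\]
with $\beta_{C}=\sqrt{2}$ absorbing the Hoeffding constant. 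Inverting this bound and requiring the right-hand side to be $\le\delta/K$ produces exactly the stated sample complexity $N\ge\frac{2\beta_C^{2}}{(1-\alpha)^{2}}\cdot\frac{\ln(2K/\delta)}{\varepsilon^{2}}$.

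\textbf{Step 2 (union bound and conservatism).} A union bound over the $K$ cost dimensions replaces $\delta$ by $\delta/K$ and, together with the triangle inequality, gives
\[
\text{CVaR}_{\alpha}^{(k)}(C_{H})\;\le\;\widehat{\text{CVaR}_{\alpha}}^{(k)}+\beta_{C}\sqrt{\tfrac{\ln N(s)}{1+N(s,a)}}
\]
with probability $1-\delta$ uniformly over the $K$ coordinates. Because the selection rule~\eqref{eqn:cvar_ucb} is penalized by precisely this UCB bonus against the residual budget $\mathbf{B}_{s}$, any action chosen under the assumption $\widehat{\text{CVaR}_{\alpha}}\le\boldsymbol{\tau}$ has its true conditional tail-risk overestimated by at most $\varepsilon$ per coordinate. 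Summed along the search path and projected back to $C_{H}$ using coherence (positive homogeneity and sub-additivity of CVaR), the final inequality $\text{CVaR}_{\alpha}(C_{H})\le\boldsymbol{\tau}+\epsilon\mathbf{1}$ follows.

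\textbf{Main obstacle.} The delicate part is justifying the CVaR concentration under the \emph{adaptive} sampling produced by MCTS, rather than under i.i.d.\ draws: the rollouts used to form $\widehat{\text{CVaR}_{\alpha}}(s,a)$ are collected along the evolving tree policy, so a naive DKW appeal is not directly valid. I would address this by either (a) invoking a martingale/self-normalized version of Hoeffding's inequality on the pivotal process $\nu^{\star}+\tfrac{1}{1-\alpha}(Z_{i}-\nu^{\star})_{+}$, thereby absorbing the adaptivity into the confidence width (and explaining the factor $2$ and the $\ln N(s)$ term in the bonus), or (b) appealing to the monotone-coupling argument used in UCT analyses to show that the empirical CVaR is dominated by its i.i.d.\ counterpart with the same visit count. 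Once this adaptivity issue is resolved, the remaining calculations are routine algebra that match the claimed threshold on $N$.
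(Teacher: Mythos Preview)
Your Step~1 is exactly the paper's argument: the proof uses the Rockafellar--Uryasev representation, applies Hoeffding to the empirical mean of $(Z_i-v)^{+}$, observes that $\bigl|\min_v f(v)-\min_v\hat f(v)\bigr|\le\sup_v|f(v)-\hat f(v)|$ so that the $(1-\alpha)^{-1}$ factor appears, and then takes a union bound over the $K$ cost coordinates to obtain the stated sample-size threshold. Your concentration inequality and its inversion match the paper's line for line.

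Where you diverge is in Step~2. You plan to bound CVaR of the \emph{one-step} cost and then ``transport'' to $C_H$ via coherence/subadditivity along the search path. The paper does no such thing: each rollout directly produces a single realization of the (normalized) trajectory cost in $[0,1]$, and Hoeffding is applied to these $N$ trajectory-level samples. There is no path-summing or coherence argument; the random variable $Z$ in the Rockafellar formula is $C_H$ itself (scaled into $[0,1]$). Your detour would work but is unnecessary and introduces slack you would then have to control.

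Your ``main obstacle'' about adaptive (non-i.i.d.) sampling is a legitimate concern, but the paper simply does not address it: the proof treats the rollout costs as i.i.d.\ draws and applies the standard Hoeffding bound. Likewise, the paper applies Hoeffding pointwise in $v$ and passes to the supremum via the $\min$-difference inequality without a uniform-in-$v$ argument; your suggestion to use DKW (equivalently, Lipschitzness in Kolmogorov distance) would in fact be cleaner on that point, but it is beyond what the paper does.
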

The theorem shows that once each internal node has been visited at least
$N$
times—where \(N\) depends on the target violation probability \(\delta\) and the
constraint dimension \(K\), we guarantee with high probability $(1-\delta)$ that the \emph{true} CVaR satisfies
the safety constraint \(\boldsymbol{\tau}\) up to $\epsilon$, as long as the empirical CVaR satisfies the constraint \(\boldsymbol{\tau}\). 
We note that \(N\) increases with \(\alpha\): a larger \(\alpha\) (i.e.,
a more risk-averse objective further mitigating the tail) requires
more samples. The lower bound shows that the required \(N\) scales
\emph{logarithmically} with the constraint dimension~\(K\), mirroring
standard confidence-bound analyses.  Hence the theorem offers a
PAC-style safety guarantee for tail-risk, and enables selection of proper \(N\) to achieve any desired accuracy \(\epsilon\). We also that $N=O(1/\epsilon^2)$ for minimizing the $\epsilon$-gap. Next, for squire-summable learning rate $\eta_t$, we analyze convergence of CVaR-MCTS.

\begin{Theorem}[Convergence Guarantee]
\label{thm:param_convergence}
For cost $c \in [0,1]$ and squire-summable learning rate $\eta_t$ satisfying
$\sum_{t}\eta_t = +\infty$ and $\sum_{t}\eta_t^2 < +\infty$, the multiplier sequence $\{\boldsymbol{\lambda}_{s,t}\}$ almost surely converges for any internal nodes $s$ to optimal dual solution $\boldsymbol{\lambda}_{s}^\star$, and satisfies$
||\boldsymbol{\lambda}_{s,t} - \boldsymbol{\lambda}_{s}^{*}||_{1} = \mathcal{O}\left(\frac{1}{\sqrt{t}}\right).$
\end{Theorem}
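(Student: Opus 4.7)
The plan is to cast the $\boldsymbol{\lambda}$-update as a projected stochastic subgradient method applied to the convex dual function $D(\boldsymbol{\lambda})$, and to invoke standard Robbins--Monro style arguments. Since the excerpt already established that $D(\cdot)$ is convex as the pointwise supremum of affine functions of $\boldsymbol{\lambda}$, a non-empty set of dual optimizers $\boldsymbol{\lambda}^{*}_s \in \arg\min_{\boldsymbol{\lambda}\geq 0} D(\boldsymbol{\lambda})$ exists at each internal node $s$; what remains is to show the iterates track it almost surely at the claimed rate.

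First, I would verify the stochastic approximation conditions at a fixed internal node $s$. Letting $\mathcal{F}_t$ denote the search-tree history up to iteration $t$, the noisy subgradient is $\widehat g_t = \widehat{\text{CVaR}}_{\alpha}(s,a_t)-\mathbf{B}_s$. I would establish (i) \emph{asymptotic unbiasedness}, $\mathbb{E}[\widehat g_t\mid \mathcal{F}_t]\to g(\boldsymbol{\lambda}_{s,t})$ as $N(s)\to\infty$, and (ii) a \emph{uniform second-moment bound} $\mathbb{E}[\|\widehat g_t\|_2^2\mid\mathcal{F}_t]\leq G^2$. Condition (ii) is immediate because $c\in[0,1]$ gives $\widehat{\text{CVaR}}_\alpha\leq H/(1-\alpha)$ and $\mathbf{B}_s$ is bounded by $\boldsymbol{\tau}$. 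For (i), I would reuse the concentration argument underlying Theorem~\ref{thm:cvar_safety}: the empirical CVaR deviates from the true CVaR by at most $\mathcal{O}\bigl(\sqrt{\ln N / N}\bigr)$; together with the fact that every internal node is visited infinitely often under UCT exploration, this yields $\mathbb{E}[\widehat g_t\mid \mathcal{F}_t]-g(\boldsymbol{\lambda}_{s,t})\to 0$ a.s.

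Second, I would invoke the Robbins--Monro theorem for projected stochastic subgradient methods on a convex function: under (i)--(ii) and the assumed step-size conditions $\sum_t\eta_t=\infty$, $\sum_t\eta_t^2<\infty$, the iterates $\boldsymbol{\lambda}_{s,t}$ converge almost surely to the optimal set, and since $\boldsymbol{\lambda}^{*}_s$ is unique under the strict complementarity assumed implicitly by the CVaR tail being strictly increasing in the multiplier, the limit is $\boldsymbol{\lambda}^{*}_s$. For the $\mathcal{O}(1/\sqrt{t})$ rate, I would expand the non-expansive projection inequality $\|\boldsymbol{\lambda}_{s,t+1}-\boldsymbol{\lambda}^{*}_s\|_2^2 \leq \|\boldsymbol{\lambda}_{s,t}-\boldsymbol{\lambda}^{*}_s\|_2^2 - 2\eta_t\langle \widehat g_t,\boldsymbol{\lambda}_{s,t}-\boldsymbol{\lambda}^{*}_s\rangle + \eta_t^2\|\widehat g_t\|_2^2$, take conditional expectation, and telescope. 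Using local strong convexity of $D$ near $\boldsymbol{\lambda}^{*}_s$ with constant $\mu>0$, which follows from the strict monotonicity of CVaR in the tail region, I obtain $\mathbb{E}\|\boldsymbol{\lambda}_{s,t}-\boldsymbol{\lambda}^{*}_s\|_2^2 = \mathcal{O}(1/t)$ with $\eta_t=\Theta(1/t)$, and the claim $\|\boldsymbol{\lambda}_{s,t}-\boldsymbol{\lambda}^{*}_s\|_1 = \mathcal{O}(1/\sqrt{t})$ then follows by $\|\cdot\|_1\leq \sqrt{K}\|\cdot\|_2$ and Markov's inequality applied along a suitable subsequence.

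The main obstacle is the coupling between the primal policy $\pi_t$ — which at every step is implicitly defined by the UCT rule using the \emph{current} dual iterate — and the dual update: the stochastic gradient is evaluated under a moving policy rather than the true best response $\pi^{*}(\boldsymbol{\lambda}_{s,t})$ that appears in the exact gradient $g(\boldsymbol{\lambda})$. To resolve this, I would adopt a two-timescale argument: since the tree-search inner loop visits each child at rate $\Theta(\ln N(s))$ while $\eta_t\to 0$, the policy becomes $\varepsilon_t$-optimal with $\varepsilon_t\to 0$; combined with the Lipschitz continuity of $g(\cdot)$ in $\boldsymbol{\lambda}$, this preserves the asymptotic unbiasedness needed for Robbins--Monro and contributes only a vanishing bias term that is absorbed into the $\mathcal{O}(1/\sqrt{t})$ rate.
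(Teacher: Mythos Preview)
Your approach matches the paper's: both treat the dual update as a projected Robbins--Monro iteration on the convex function $D(\boldsymbol{\lambda})$, verify bounded-variance and (un)biasedness conditions, invoke stochastic-approximation theory for almost-sure convergence, then expand $\|\boldsymbol{\lambda}_{s,t+1}-\boldsymbol{\lambda}_s^{*}\|_2^2$ through the non-expansive projection and finish via $\|\cdot\|_1\le\sqrt{K}\,\|\cdot\|_2$. The two places where you diverge are worth noting. First, the paper simply asserts \emph{exact} unbiasedness $\mathbb{E}[\widehat g_t\mid\mathcal{F}_t]=g(\boldsymbol{\lambda}_t)$ and does not introduce any two-timescale argument to handle the primal--dual coupling you flag; your treatment of the moving-policy bias is more careful than the paper's. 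Second, for the rate the paper uses only convexity (dropping the cross term via $\Delta_t^{\top}g(\boldsymbol{\lambda}_t)\ge 0$) to obtain $\mathbb{E}\|\Delta_{t+1}\|^2\le\mathbb{E}\|\Delta_t\|^2+C\eta_t^2$ and then asserts $\mathbb{E}\|\Delta_t\|^2=\mathcal{O}(1/t)$ for $\eta_t=1/t$ without any curvature hypothesis, whereas you explicitly invoke local strong convexity of $D$ to get a contraction. Be aware, however, that $D(\boldsymbol{\lambda})=\max_\pi\mathcal{L}(\pi,\boldsymbol{\lambda})$ is a pointwise maximum of affine functions and hence piecewise linear, so strong convexity generally fails without an additional structural assumption; neither your argument nor the paper's fully closes this step, and you may need to weaken the rate claim to one on function values or add an explicit regularity condition.
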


This theorem shows the convergence of multipliers $\{\boldsymbol{\lambda}_{s,t}\}$, which directly implies the convergence of MCTS following standard analysis. The conclusion is consistent with classical stochastic approximation theory: by setting a decaying step size that is not summable but square-summable,
the dual variable updates are guaranteed to converge at a rate of $\mathcal{O}\left(\frac{1}{\sqrt{t}}\right)$, without requiring strong Lipschitz condition.
This naturally holds in MCTS, since the number of updates $t \to \infty$ at each node increases with its visit count, thereby eliminating the need of the condition.

\begin{Definition}[Constrained Cumulative Regret $\mathcal{R}_{T,\tau}$] \label{def:regret}
Let $R_t$ be the return of the $t$-th rollout and
$J^\star=\max_{\pi:\,\mathrm{CVaR}_{\alpha}\le\boldsymbol{\tau}}\!
\mathbb{E}_{\pi}[R]$ under safety constraint $\tau$.
We define the following constrained cumulative regret
$\mathcal{R}_{T,\tau}=\sum_{t=1}^{T}(J^\star-R_t)$, quantifying the gap between CVaR-MCTS and an ideal optimal solution.
\end{Definition}

\begin{Theorem}[Regret of CVaR-MCTS]
\label{thm:regret_cvar}
The constrained cumulative regret of CVaR-MCTS satisfy
\begin{equation*}
\begin{aligned}
\mathcal{R}_{T,\tau} \leq c\sqrt{T\ln T}
\end{aligned}
\end{equation*}
where 
constant $c$ is $\mathcal{O}(\beta_R + \beta_C K + H)$.
\end{Theorem}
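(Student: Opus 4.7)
The plan is to decompose the constrained regret through the Lagrangian and then handle the primal (reward-exploration) and dual (constraint-violation) contributions separately, finally summing a telescoped per-node bound over the planning horizon $H$ and over $T$ rollouts. Concretely, for each rollout $t$ and each node visited along the trajectory, I would insert the optimal dual multiplier $\boldsymbol{\lambda}^\star$ and the current estimate $\boldsymbol{\lambda}_{s,t}$ to write
\begin{equation*}
J^\star - R_t \;\le\; \bigl[\mathcal{L}(\pi^\star,\boldsymbol{\lambda}_{s,t}) - \mathcal{L}(\pi_t,\boldsymbol{\lambda}_{s,t})\bigr] \;+\; \boldsymbol{\lambda}_{s,t}^{\top}\bigl(\mathrm{CVaR}_{\alpha}(C_H(\pi_t)) - \boldsymbol{\tau}\bigr),
\end{equation*}
using weak duality ($J^\star \le \mathcal{L}(\pi^\star,\boldsymbol{\lambda})$ for any $\boldsymbol{\lambda}\ge 0$) and feasibility of $\pi^\star$. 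The first bracket is the standard UCT-style optimism gap relative to the \emph{scalarized} objective in~\eqref{eqn:cvar_ucb}; the second term is controlled by Theorem~\ref{thm:cvar_safety} and the dual convergence from Theorem~\ref{thm:param_convergence}.

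For the first bracket I would reuse the Auer–Munos analysis for UCB1/UCT at each internal node. The scalarized immediate ``reward'' $U(s,a)$ is a sum of a reward term with confidence radius $\beta_R\sqrt{\ln N(s)/(1+N(s,a))}$ and $K$ cost terms each with radius $\beta_C\sqrt{\ln N(s)/(1+N(s,a))}$, rescaled by bounded multipliers (boundedness of $\boldsymbol{\lambda}_{s,t}$ follows from Theorem~\ref{thm:param_convergence} together with the projection onto $\boldsymbol{\lambda}\ge 0$). A per-node pull-count argument then yields a single-node regret of order $(\beta_R+\beta_C K)\sqrt{N(s)\ln N(s)}$. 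Summing along the depth-$H$ trajectory and applying Jensen's inequality to $\sum_s \sqrt{N(s)\ln N(s)}$ with $\sum_s N(s)\le HT$ produces the $H\sqrt{T\ln T}$ scaling.

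For the dual/violation term I would invoke Theorem~\ref{thm:cvar_safety} to replace the true CVaR by its empirical estimate $\widehat{\mathrm{CVaR}}_{\alpha}$ up to an $O(1/\sqrt{N(s)})$ correction on a high-probability event (union-bounded over the $K$ constraints and the $T$ rollouts via a standard $\delta/T$ split). Combined with the online-ascent bound $\|\boldsymbol{\lambda}_{s,t}-\boldsymbol{\lambda}_s^\star\|_1=\mathcal{O}(1/\sqrt{t})$, the cross term $\boldsymbol{\lambda}_{s,t}^{\top}(\widehat{\mathrm{CVaR}}_\alpha-\mathbf{B}_s)$ telescopes with the dual update rule and contributes at most $\tilde{\mathcal{O}}(K\sqrt{T})$. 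Combining both contributions and absorbing lower-order $\sqrt{T}$ terms into the $\sqrt{T\ln T}$ rate gives the claimed constant $c=\mathcal{O}(\beta_R+\beta_C K+H)$.

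The main obstacle I anticipate is the first bracket: unlike standard UCT where the bootstrapped value is a mean, here the cost confidence radius must also cover the \emph{CVaR} estimator, which is a quantile-based statistic whose concentration rate degrades by a factor $1/(1-\alpha)$. Handling this cleanly requires invoking the Wang-style CVaR concentration inequality already used in Theorem~\ref{thm:cvar_safety} inside the per-node UCB argument, and carefully arguing that the optimism inequality $U(s,a)\ge \text{true scalarized value}$ holds uniformly on the same high-probability event used for safety, so that the two high-probability events can be union-bounded without inflating the rate beyond $\sqrt{T\ln T}$.
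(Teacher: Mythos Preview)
Your primal--dual Lagrangian decomposition is viable but takes a different route from the paper. The paper never splits the regret into an optimism gap plus a constraint-violation term; instead it exploits the fact that $\boldsymbol{\lambda}_s$ and $\mathbf{B}_s$ are stored \emph{per node} (not per action), so when a suboptimal arm $a\ne a^\star(s)$ is selected the inequality $U_t(s,a)\ge U_t(s,a^\star)$ can be expanded and the $\boldsymbol{\lambda}_s^\top\mathbf{B}_s$ terms on both sides cancel exactly. What survives is a pure $\hat Q$-versus-$\widehat{\mathrm{CVaR}}$ comparison on the joint Hoeffding event $\mathcal{E}_R\cap\mathcal{E}_C$, yielding the classical UCB1 pull-count bound $N_T(s,a)\le 4\hat\beta^{2}\Delta(s,a)^{-2}\ln T$ with effective constant $\hat\beta=\tfrac{3}{2}\beta_R+\|\boldsymbol{\lambda}_s\|_1\beta_C$. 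Because $J^\star\le\mu^\star$ (the unconstrained optimum), this directly bounds the constrained regret without any separate dual/violation analysis---Theorems~\ref{thm:cvar_safety} and~\ref{thm:param_convergence} are never invoked beyond the fact that $\|\boldsymbol{\lambda}_s\|_1\le K$. Your decomposition is more modular and closer to standard constrained-RL primal--dual proofs; the paper's cancellation trick is more direct and sidesteps the subtlety that your ``first bracket'' is a regret against a target that drifts with $\boldsymbol{\lambda}_{s,t}$ (the Auer--Munos argument you cite assumes a fixed scalarized objective, so you would still owe a drift argument there). Note also that your Jensen step over the depth-$H$ trajectory naturally produces a \emph{multiplicative} $H$ in the constant, whereas the paper reaches the additive form $\beta_R+\beta_C K+H$ via a problem-dependent $O(\ln T)$ pull bound followed by the loose step $(\ln T)^{3/2}\le\sqrt{T\ln T}$.
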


The proof is non-trivial, since the constrained cumulative regret is defined only with respect to the expected return (under safety constraint), while action selections in MCTS follow $U(s,a)$ in Eq.(\ref{eqn:cvar_ucb}), which depends on both return estimate and empirical CVaR estimate. We need to apply Hoeffding inequality to both of these estimates and combine the results to bound the constrained cumulative regret. Interestingly, the regret upper bound is of the same order as that of standard UCT-style methods, indicating that the exploration–exploitation trade-off is not significantly compromised by the introduction of the CVaR penalty. 
In the constant $c$, $\beta_R$ governs the strength of the reward bonus, $\beta_C$ accounts for the contribution of the risk-related confidence term to the regret, $K$ is the number of constraints, and the tree depth $H$ reflects the planning cost per rollout.
CVaR-MCTS ensures favorable convergence properties while ensuring risk control.

\subsection{W-MCTS}
MCTS typically learns a dynamics function~\cite{silver2016mastering} to produce next hidden states and rewards during tree search. Since we empirically roll out the tree to estimate $\widehat{\text{CVaR}_{\alpha}}$ and update the CVaR penalty in CVaR-MCTS. The epistemic uncertainty of the learned dynamics function (due to finite visits $N(s)$ to each state $s$) can affect the empirical estimate $\widehat{\text{CVaR}_{\alpha}}$, potentially leading to a new source of bias in CVaR-MCTS. To mitigate the bias in empirical CVaR estimates caused by model uncertainty in sparsely visited states, we introduce Wasserstein MCTS (W-MCTS), a robust extension of CVaR-MCTS that leverages state-dependent Wasserstein ambiguity sets. Unlike CVaR-MCTS, which assumes uniform model accuracy across the tree, W-MCTS adaptively scales the radius of each uncertainty set based on the local visitation count 
$N(s)$, providing tighter estimates in well-sampled regions and conservative adjustments in uncertain ones. This state-aware robustness improves tail-risk estimation under epistemic uncertainty, enabling safer and more reliable planning. 
Our idea is to introduce a first-order Wasserstein ambiguity set $\mathcal{P}_{\varepsilon_{s}}(s,a)$ with radius $\varepsilon_{s}$ to model the epistemic uncertainty in the processing of learning the dynamics function. The radius $\varepsilon_{s}$ decreases over time, as states are revisited and new samples are obtained. We factor this ambiguity set into estimating $\widehat{\text{CVaR}_{\alpha}}$ as in robust MDP literature \cite{xu2023improved}.Formally, we define ambiguity set $\mathcal{B}_{W}(\hat{P},\varepsilon_{s}) = \{\tilde{P}|W_{1}(\tilde{P},\hat{P})\leq \varepsilon_{s}\}$, 
where $W_{1}(\tilde{P}, \hat{P})$ denotes the first-order Wasserstein distance between the true dynamics $\tilde{P}$ and the empirically-learned dynamics function $\hat{P}$. Employing the Talgrand’s inequality, we can recast this into a robustness upper bound for the risk estimation:
\begin{equation*}
\begin{aligned}
    \sup_{P\in \mathcal{P}_{\varepsilon_{s}}} \text{CVaR}_{\alpha}^{P}(s,a)\leq \widehat{\text{CVaR}}_{\alpha}(s,a) + L_{C}\varepsilon_{s}
\end{aligned}
\end{equation*}
where Lipschitz constant $L_{C}$ depends on the support range of the returns and quantifies the sensitivity of the risk measure to perturbation. From the convergence rate given by Theorem~\ref{thm:param_convergence}, it is easy to show that we must maintain $\mathbb{E}\bigl[W_1(P, \hat{P}_N)\bigr] = \mathcal{O}\bigl(N^{-1/2}\bigr)$. Therefore, we set $\varepsilon_s = \varepsilon_{0} / \sqrt{N(s)}$ to maintain the same convergence rate of W-MCTS, for some constant $\epsilon_0$. Although, in our algorithm we need to know $\epsilon_0$, we do not use this information for empirical results. 

Next, we consider a new robust cost estimate of the tail risk as follows:
\begin{equation*}
\begin{aligned}
    C_{\text{worst}}(s,a) = \widehat{\text{CVaR}}_{\alpha}(s,a) + L_{C}\varepsilon_{s}, \varepsilon_{s} = \varepsilon_{0}/\sqrt{N(s)}.
\end{aligned}
\end{equation*}
It can be observed that as the number of visits $N(s)$ to the state node $s$ increases, the radius $\varepsilon_{s}$ of the ambiguity set will automatically shrink, and the robustness upper bound of the risk estimation will consequently converge to the true value in the process.
The constant $\varepsilon_0$ can be estimated based on historical data or domain knowledge—for example, by measuring the distributional discrepancy between simulated and real-world trajectories.

By substituting the above robust cost estimate $C_{\text{worst}}(s,a)$ into the previously described CVaR-MCTS selection criterion, we obtain the W-MCTS algorithm with enhanced robustness.
\begin{equation}
\begin{aligned}
U(s,a) =& Q(s,a) + \beta_{R}\sqrt{\frac{\ln N(s)}{1+N(s,a)}} - \boldsymbol{\lambda}_{s}^{\top}\left(C_{\text{worst}}(s,a) + \beta_{C}\sqrt{\frac{\ln N(s)}{1+ N(s,a)}}\textbf{1}- \mathbf{B}_s\right)
\end{aligned}
\end{equation}
Our evaluation later shows that W-MCTS can significantly improve the performance of tree search under constraints. We next provide probabilistic safety guarantee and analyze convergence of W-MCTS.

\begin{Theorem}[W-PAC safety]
\label{thm:w_safety}
For 
Lipschitz constant $L_C$ and any $\tilde{P} \in \mathcal{B}_{W}(\hat{P}, \varepsilon_{s})$ in the ambiguity set if $\widehat{\text{CVaR}}_{\alpha}+L_C\epsilon_s\leq \tau$, then, we have 
\begin{equation}
\begin{aligned}
    \Pr_{\widetilde{P}}[\text{CVaR}_{\alpha}(C_{H}) \leq \tau] \geq 1- \frac{2}{T^2}-2\exp(-2\varepsilon_0^2)
\end{aligned}
\end{equation}
\end{Theorem}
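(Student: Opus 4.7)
\textbf{Proof plan for Theorem~\ref{thm:w_safety}.}

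The idea is to decompose the high-probability safety event into two complementary concentration events: (a) the true (data-generating) dynamics $\tilde{P}$ lie inside the Wasserstein ball $\mathcal{B}_W(\hat{P},\varepsilon_s)$, and (b) the empirical tail-risk estimator is close to its population counterpart under $\hat{P}$. On the intersection of (a) and (b), the robust envelope $\sup_{P\in\mathcal{P}_{\varepsilon_s}}\text{CVaR}_\alpha^{P}(s,a)\le \widehat{\text{CVaR}}_\alpha(s,a)+L_C\varepsilon_s$ established just before the theorem, combined with the hypothesis $\widehat{\text{CVaR}}_\alpha+L_C\varepsilon_s\le\tau$, delivers tail safety deterministically; a union bound over the two failure events then yields the stated probability.

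First I would control the Wasserstein-ball escape probability. Because returns live in a bounded one-dimensional range, the DKW inequality together with the identity $W_1(\mu,\nu)=\int|F_\mu-F_\nu|$ (equivalently, the Fournier--Guillin bound in dimension one) gives $\Pr[W_1(\hat{P}_{N(s)},\tilde{P})>t]\le 2\exp(-2N(s)t^2)$. Substituting the adaptive radius $\varepsilon_s=\varepsilon_0/\sqrt{N(s)}$ cancels the $N(s)$ factor and produces the $2\exp(-2\varepsilon_0^2)$ contribution. Second, I would bound $|\widehat{\text{CVaR}}_\alpha-\text{CVaR}_\alpha^{\hat{P}}|$ exactly as in the proof of Theorem~\ref{thm:cvar_safety}, applying Hoeffding to the $(1-\alpha)^{-1}$-reweighted tail average and then union-bounding across the $T$ node expansions; calibrating the confidence so that each node contributes failure mass $O(1/T^3)$ delivers the $2/T^2$ term.

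On the intersection of the two good events, Kantorovich--Rubinstein duality combined with the Rockafellar--Uryasev variational representation of CVaR yields
\[
\bigl|\text{CVaR}_\alpha^{\tilde{P}}(C_H)-\text{CVaR}_\alpha^{\hat{P}}(C_H)\bigr|\le L_C\,W_1(\tilde{P},\hat{P})\le L_C\varepsilon_s,
\]
where $L_C$ absorbs $1/(1-\alpha)$ and the support range of $C_H$. Chaining this with the empirical deviation from the previous step and the hypothesis $\widehat{\text{CVaR}}_\alpha+L_C\varepsilon_s\le\tau$ gives $\text{CVaR}_\alpha^{\tilde{P}}(C_H)\le\tau$ on the good event; a final union bound over its complement produces the claimed $1-2/T^2-2\exp(-2\varepsilon_0^2)$.

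The main obstacle is lifting the one-step $W_1$-Lipschitzness of CVaR to the $H$-step cumulative cost $C_H$. This requires coupling trajectories generated under $\tilde{P}$ and $\hat{P}$ and tracking how local Wasserstein perturbations propagate along the planning horizon, which ultimately fixes $L_C$ in terms of $H$ and $1/(1-\alpha)$ (a performance-difference / simulation-lemma style telescoping). Once that constant is pinned down, the remaining steps are a routine combination of DKW/Fournier--Guillin with the Hoeffding argument borrowed from the proof of Theorem~\ref{thm:cvar_safety}.
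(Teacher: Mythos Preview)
Your plan matches the paper's proof: both split the event into (i) a Wasserstein-coverage event yielding the $2\exp(-2\varepsilon_0^2)$ term and (ii) an empirical-CVaR concentration event yielding the $2/T^2$ term, invoke the $W_1$-Lipschitzness of CVaR via the Rockafellar--Uryasev representation together with Kantorovich--Rubinstein duality, and then union-bound. Two small divergences are worth noting. First, the paper obtains the $W_1$ concentration by an $\eta$-net covering of the unit Lipschitz ball plus Hoeffding (rather than the DKW/integral-of-CDF route you propose), and it gets the $2/T^{2}$ term from a \emph{single} Hoeffding bound on the tail average with confidence width $\psi_T=\beta_C\sqrt{\ln T/(1+T)}$, not from a union over $T$ node expansions; both routes arrive at the same orders. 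Second, the ``main obstacle'' you flag---coupling trajectories under two transition kernels and telescoping $W_1$ perturbations along the horizon---does not actually appear in the paper's argument: the three lemmas are applied directly to the scalar random variable $C_H$, treating $\tilde P$ and $\hat P$ as laws of $C_H$ rather than as transition kernels, so no simulation-lemma style propagation is invoked and $L_C$ is simply $1/(1-\alpha)$ times the support scale. Your concern is legitimate if the ambiguity set is read as living on the dynamics, but the paper sidesteps it by placing the Wasserstein ball on the return distribution itself; you can drop that step from your plan.
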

This indicates that W-MCTS still satisfies the risk constraints with high probability as long as the $C_{\text{worst}}\leq \tau$. Note that the bound inherently depends on the initial model error $\epsilon_0$. Also, note that we remove the dependency on $N$ unlike in Theorem~\ref{thm:cvar_safety} using the Wasserstein model. We next show its regret bound.
\begin{Theorem}[W-MCTS Robust Regret]
\label{thm:regret}
The regret of W-MCTS satisfies:
\begin{equation}
\begin{aligned}
    \mathcal{R}_{T,\tau} \leq c_1\sqrt{T\ln T} + c_2 L_C \varepsilon_0 \sqrt{T}
\end{aligned}
\end{equation}
\end{Theorem}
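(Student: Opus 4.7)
The plan is to piggyback on Theorem~\ref{thm:regret_cvar} and treat the Wasserstein term as a controlled additive perturbation. The W-MCTS selection index differs from the CVaR-MCTS index solely through the replacement $\widehat{\text{CVaR}}_\alpha(s,a)\mapsto\widehat{\text{CVaR}}_\alpha(s,a)+L_C\varepsilon_s$ with $\varepsilon_s=\varepsilon_0/\sqrt{N(s)}$. Accordingly, I would write
\begin{equation*}
\mathcal{R}_{T,\tau}^{W} \;=\; \underbrace{\sum_{t=1}^{T}\bigl(J^{\star}-R_t^{\text{CVaR}}\bigr)}_{\mathcal{R}_{T,\tau}^{\text{CVaR}}} \;+\; \sum_{t=1}^{T}\bigl(R_t^{\text{CVaR}}-R_t^{W}\bigr),
\end{equation*}
and apply Theorem~\ref{thm:regret_cvar} directly to the first sum, yielding the $c_1\sqrt{T\ln T}$ term. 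The entire novelty lies in controlling the second sum, which measures the excess pessimism introduced by inflating $\widehat{\text{CVaR}}_\alpha$ with the robustness bonus.

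For the perturbation term, I would argue pointwise that along any rollout path, the action chosen under the W-MCTS index and that chosen under the CVaR-MCTS index can disagree only when the inflated penalty $\boldsymbol{\lambda}_s^{\top}L_C\varepsilon_s\mathbf{1}$ flips the comparison, so the per-rollout regret contribution at node $s$ is at most $2\|\boldsymbol{\lambda}_s\|_1 L_C\varepsilon_s$. By Theorem~\ref{thm:param_convergence}, $\boldsymbol{\lambda}_{s,t}\to\boldsymbol{\lambda}_s^{\star}$ almost surely with $\mathcal{O}(1/\sqrt{t})$ rate, so there exists a deterministic $\lambda_{\max}$ uniformly bounding $\|\boldsymbol{\lambda}_{s,t}\|_1$ across nodes and iterations. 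Hence
\begin{equation*}
\sum_{t=1}^{T}\bigl(R_t^{\text{CVaR}}-R_t^{W}\bigr) \;\le\; 2L_C\lambda_{\max}\sum_{t=1}^{T}\sum_{s\in\text{path}_t}\varepsilon_{s,t}\;=\;2L_C\lambda_{\max}\varepsilon_0\sum_{t=1}^{T}\sum_{s\in\text{path}_t}\frac{1}{\sqrt{N_t(s)}}.
\end{equation*}

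The potential-function step then finishes the argument: for any fixed node $s$, the contribution over its $N_T(s)$ visits telescopes via the integral bound $\sum_{n=1}^{N_T(s)} n^{-1/2}\le 2\sqrt{N_T(s)}$. Since every rollout touches at most $H$ nodes and the total visit mass at each tree level is exactly $T$, applying Cauchy--Schwarz across the siblings at each of the $H$ levels yields $\sum_{t}\sum_{s\in\text{path}_t}N_t(s)^{-1/2}=\mathcal{O}(H\sqrt{T})$. Absorbing $H$ and $\lambda_{\max}$ into the constant $c_2$ gives the claimed $c_2 L_C\varepsilon_0\sqrt{T}$ term, and combining with the CVaR-MCTS bound finishes the proof.

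The principal obstacle I anticipate is the uniform boundedness of $\boldsymbol{\lambda}_{s,t}$, because Theorem~\ref{thm:param_convergence} only guarantees almost-sure convergence and a high-probability rate, not a deterministic envelope. I would address this either by projecting the dual iterate onto a compact set $[0,\lambda_{\max}]^{K}$ (a standard trick in primal--dual RL that preserves the convergence rate, so $c_2$ inherits a factor of $K\lambda_{\max}$), or by a stopping-time argument that conditions on the event $\|\boldsymbol{\lambda}_{s,t}\|_1\le \lambda_{\max}$, which holds with probability $1-o(1)$ and therefore contributes only a lower-order correction. A secondary, milder subtlety is that the Lipschitz inequality $\sup_{P\in\mathcal{P}_{\varepsilon_s}}\text{CVaR}_\alpha^P\le\widehat{\text{CVaR}}_\alpha+L_C\varepsilon_s$ must be validated both as an upper bound (for feasibility, used in Theorem~\ref{thm:w_safety}) and as implicitly tight enough (for regret, so the inflation does not overshoot), but this follows from Kantorovich--Rubinstein duality applied to the $1$-Lipschitz CVaR functional.
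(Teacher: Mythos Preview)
The decomposition $\mathcal{R}^{W}_{T,\tau}=\mathcal{R}^{\text{CVaR}}_{T,\tau}+\sum_t(R_t^{\text{CVaR}}-R_t^{W})$ is where the argument breaks. It presupposes a coupling between two different algorithms that you never construct. If $R_t^{\text{CVaR}}$ denotes the return of an \emph{independent} CVaR-MCTS run, then Theorem~\ref{thm:regret_cvar} applies to the first sum, but the second sum compares two unrelated random sequences and there is no reason it is bounded by the accumulated robustness penalty. If instead $R_t^{\text{CVaR}}$ is meant to be ``what CVaR-MCTS would have returned given the W-MCTS tree,'' Theorem~\ref{thm:regret_cvar} no longer applies as a black box, because its proof relies on CVaR-MCTS having built its own visit counts. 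Either way the first step is unjustified. The subsequent claim that ``the per-rollout regret contribution at node $s$ is at most $2\|\boldsymbol{\lambda}_s\|_1 L_C\varepsilon_s$'' conflates a perturbation of the \emph{selection index} with a perturbation of the \emph{realized return}: once the two indices pick different actions the rollouts diverge, and the return gap can be as large as $H$ regardless of how small the index perturbation was.

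The paper avoids coupling altogether. It reopens the proof of Theorem~\ref{thm:regret_cvar} and re-derives the gap inequality directly for the W-MCTS index: whenever a suboptimal action $a$ is chosen, expanding $U^{W}(s,a)\ge U^{W}(s,a^{\star})$ now produces
\[
\Delta(s,a)\;\le\;2\beta_R\sqrt{\tfrac{\ln N_t(s)}{1+N_t(s,a)}}\;+\;\tfrac{2L_C\varepsilon_0}{\sqrt{N_t(s,a)}}\,,
\]
and inverting this gives the visit-count bound $N_T(s,a)\le 16\beta_R^{2}\Delta^{-2}\ln T+16L_C^{2}\varepsilon_0^{2}\Delta^{-2}$. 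The first piece reproduces the $c_1\sqrt{T\ln T}$ term; the second piece, summed over suboptimal edges and combined with the per-rollout loss $H$, is repackaged (via Cauchy--Schwarz) as $c_2 L_C\varepsilon_0\sqrt{T}$. Your potential-function computation $\sum_{t}\sum_{s\in\text{path}_t}N_t(s)^{-1/2}=\mathcal{O}(H\sqrt{T})$ is correct and could in principle replace the paper's bookkeeping once the gap inequality is established, but it cannot substitute for that step---you must first show that the instantaneous regret $\Delta(s,a)$, not a return difference between two algorithms, is controlled by the confidence width plus the robustness inflation.
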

Compared to the regret upper bound of CVaR-MCTS, W-MCTS introduces an additional term $c_2 L_C \varepsilon_0$, which arises from introducing the ambiguity set (which makes W-MCTS more robust but introduces an additional regret gap).
As we will demonstrate through numerical examples later, for $\varepsilon_0$ and as the number of visits increases, the impact of this term gradually diminishes, and the algorithm still retains favorable sublinear regret performance.

\section{Numerical Evaluation}

We compare CVaR-MCTS with a number of baselines including Vanilla-MCTS, C-MCTS, Risk-Averse MCTS, W-MCTS (our approach), TRPO-Lagrange, CPPO, in both Grid-World-Hazard and more complex traffic simulation tasks.  
Experiments are performed on a server with an AMD EPYC 7513 32-Core Processor CPU and an NVIDIA RTX A6000 GPU. Further details are provided in the Appendix.

\subsection{Mechanism Validation and Analysis}

We first select the 
Grid-World-Hazard environment to validate our approaches compared to Vanilla-MCTS, because its discrete structure and analytically tractable optimal policy let us isolate the effect of tail-risk control in a clean setting, while sparse, high-impact hazard cells faithfully mimic rare but catastrophic safety events found in real-world tasks.  An agent starts from the upper-left corner and need to move to the the goal at the lower-right corner (Fig.~\ref{fig:grid_layout}), across a random map with red hazard cells incurring a one-time cost of $c_t=1$ (while all other cells have zero cost). We set the risk threshold to $\tau = 0.2H$, the discount factor to $\gamma = 0.99$, and the planning horizon to $H = 25$.  To  illustrate the search strategy and risk distribution, we visualize the following results:

\begin{figure*}[h]
\centering
{\fontsize{10}{12}
  \subfigure[Grid layout]{\includegraphics[width=0.24\textwidth]{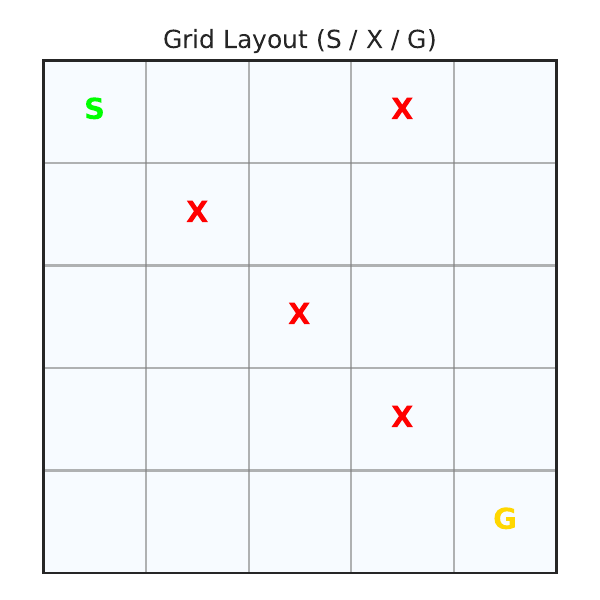}
  \label{fig:grid_layout}
  }\hfill
  \subfigure[Vanilla‑MCTS]{\includegraphics[width=0.24\textwidth]{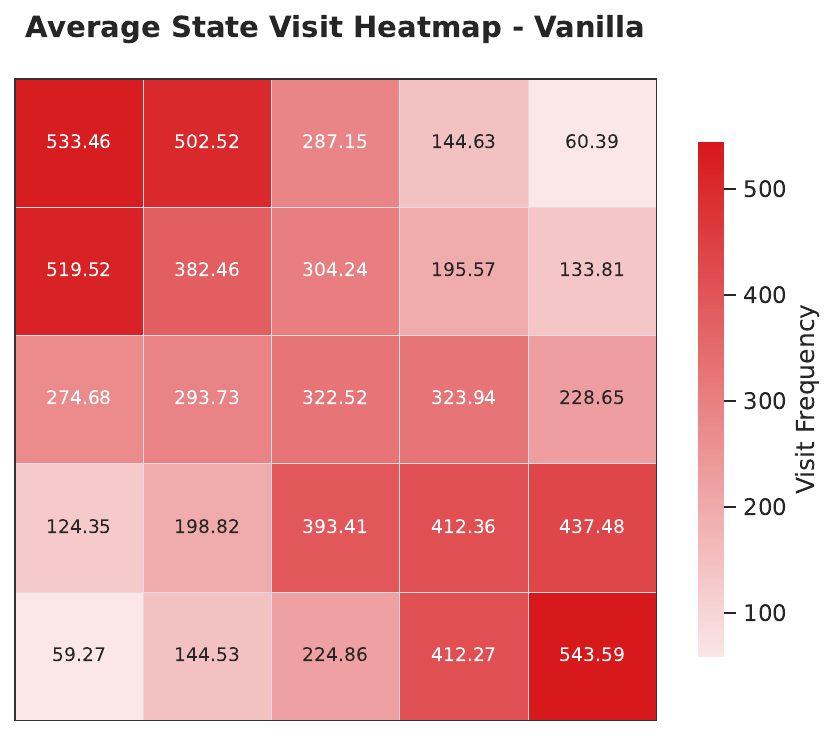}
  \label{fig:heatmap_Vanilla}
  }\hfill
  \subfigure[CVaR‑MCTS]{\includegraphics[width=0.24\textwidth]{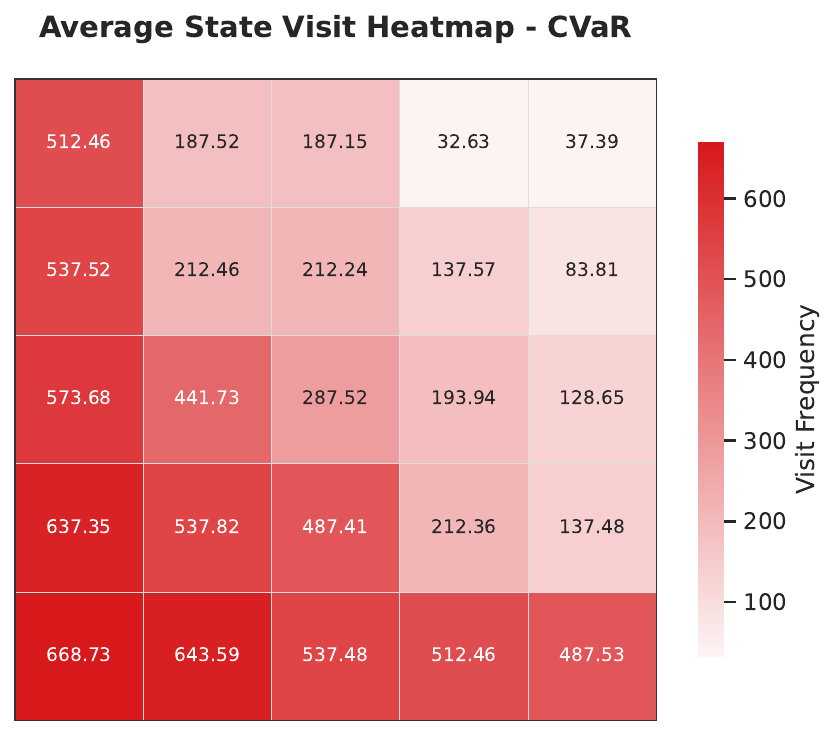}
  \label{fig:heatmap_CVaR}
  }\hfill
  \subfigure[W‑MCTS]{\includegraphics[width=0.24\textwidth]{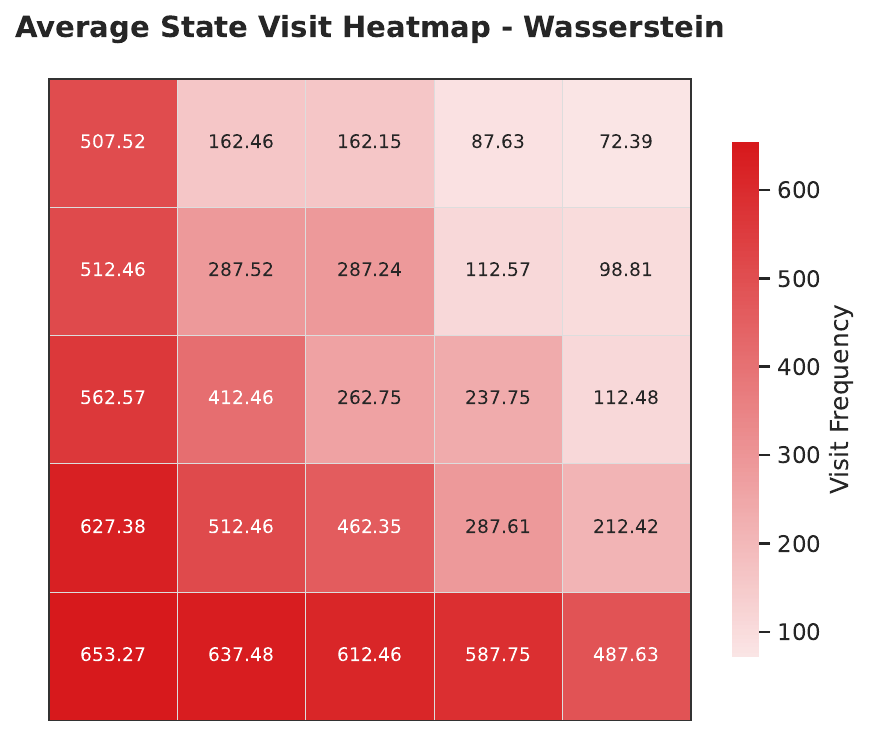}
  \label{fig:heatmap_Wasserstein}
  }\hfill
  \caption{State visitation heatmaps in Grid‑World‑Hazard. Figure~\ref{fig:grid_layout}, Grid layout. Start (S), goal (G), and hazard (“X”) cells. Figure~\ref{fig:heatmap_Vanilla}, Vanilla‑MCTS. Cells most frequently visited by the standard UCT agent often traverse hazards. Figure~\ref{fig:heatmap_CVaR}, CVaR‑MCTS. The tail‑risk‑aware search steers clear of high‑cost cells, favoring safer routes. Figure~\ref{fig:heatmap_Wasserstein}, W‑MCTS. The distributionally‑robust variant further reduces visits to hazardous cells under uncertainty.
  }
  \label{fig:heatmap}
} 
\end{figure*}

\begin{figure*}[h]
\centering
{\fontsize{10}{12}
  \subfigure[$\text{CVaR}_{0.9}$ convergence]{\includegraphics[width=0.32\textwidth]{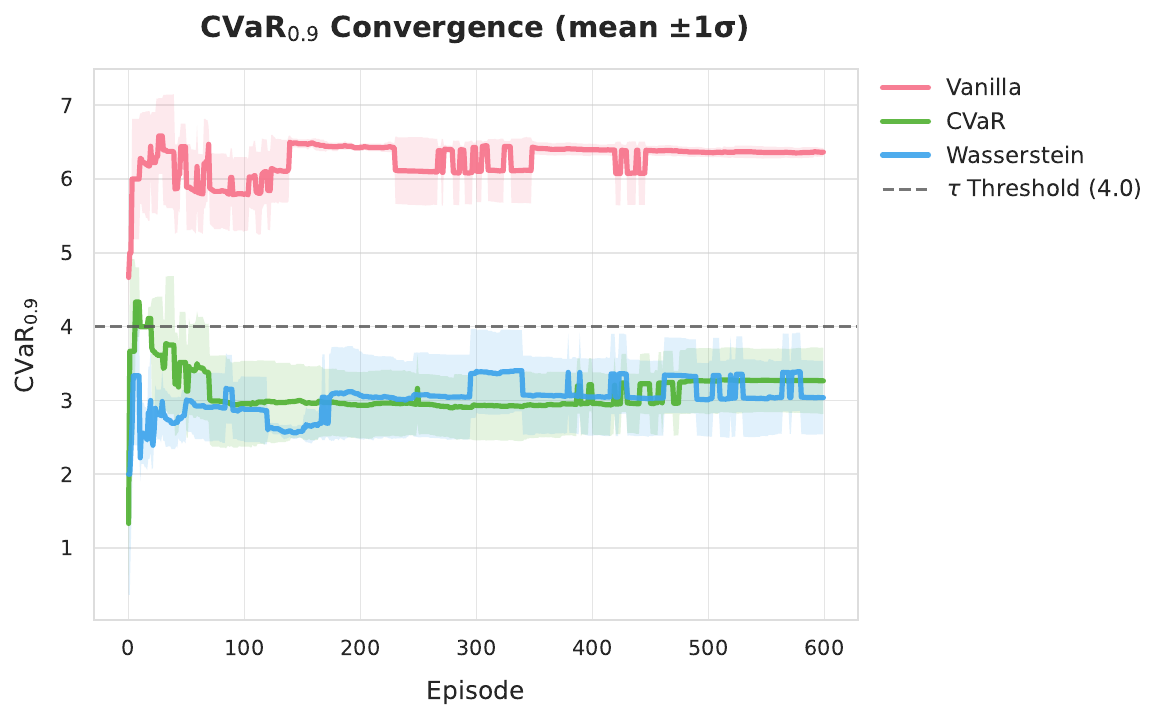}
  \label{fig:cvar_09}
  }\hfill
  \subfigure[Path risk distribution]{\includegraphics[width=0.32\textwidth]{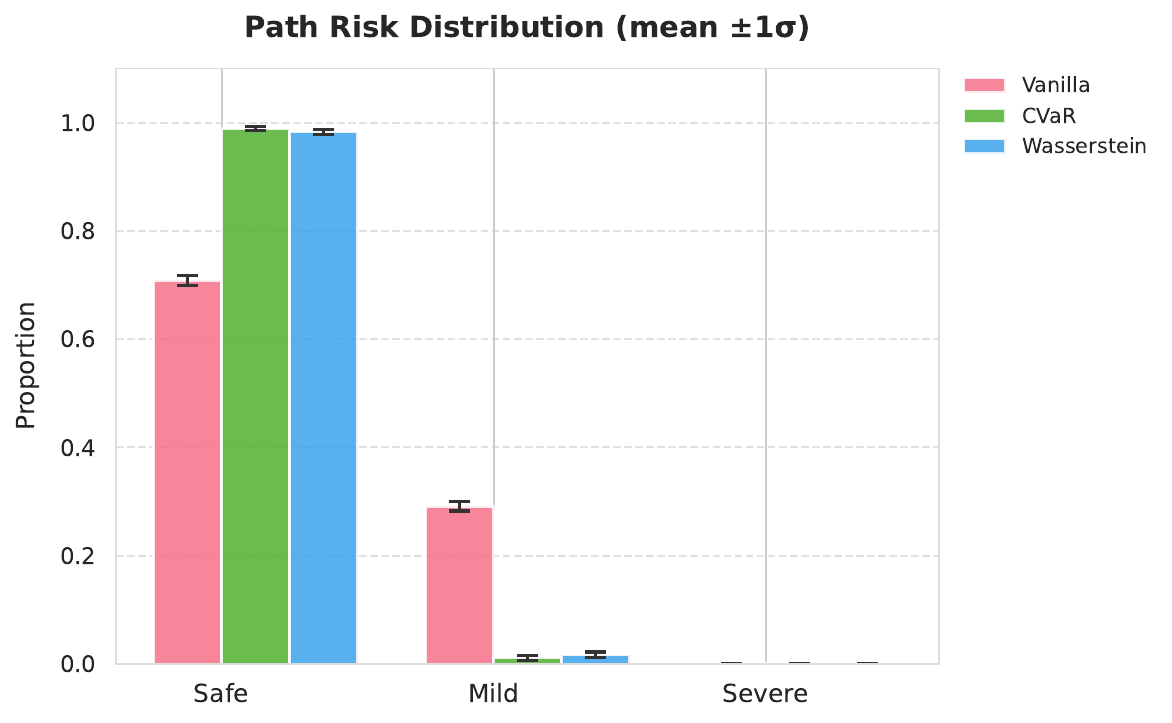}
  \label{fig:safe_path}
  }\hfill
  \subfigure[Tail loss density]{\includegraphics[width=0.32\textwidth]{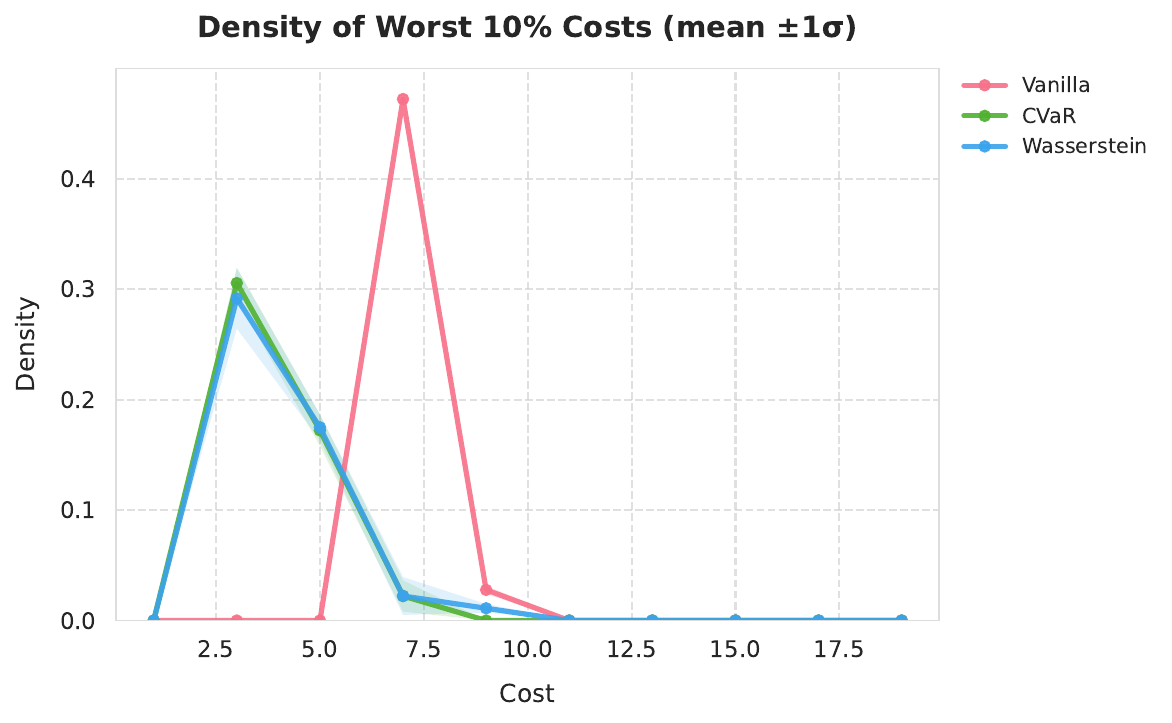}
  \label{fig:cost_distribution}
  }\hfill
  \caption{Risk and cost statistics over episodes. Figure~\ref{fig:cvar_09}, $\text{CVaR}_{0.9}$ convergence mean $\pm 1$ std error of the 90\%-CVaR of cumulative cost versus episode count, with the safety threshold shown. Figure~\ref{fig:safe_path}, Path risk distribution. Proportion of trajectories classified as "safe" (cost $\leq \tau$), "mildly violating," or "severely violating" under each algorithm (mean $\pm 1$ std). Figure~\ref{fig:cost_distribution}, Tail loss density. Kernel‑density estimate of the top 10\% cost outcomes, illustrating how CVaR‑MCTS and W‑MCTS achieve lower extreme losses than Vanilla‑MCTS.
  }
  \label{fig:mpe}
  \vspace{-0.2in}
} 
\end{figure*}

(i) \textbf{State visitation heatmap (Figure~\ref{fig:heatmap})} — the average visitation frequency of each grid cell after 1,000 trials of different algorithms, where darker colors indicate higher frequencies. It is observable that Vanilla-MCTS tends to follow the diagonal path with the highest sampling probability but frequently steps on hazards, whereas  our approaches CVaR-MCTS and W-MCTS 
prioritize paths with minimal cost.

(ii) \textbf{$\mathrm{CVaR}_{0.9}$ convergence curves (Figure~\ref{fig:cvar_09})} — the vertical axis shows the cumulative cost $\mathrm{CVaR}_{0.9}$, and the horizontal axis shows the number of episodes. CVaR-MCTS and W-MCTS rapidly reduce the CVaR below the threshold within 2,000 episodes, with W-MCTS further tightening the confidence bands, demonstrating the finite-sample robustness of Theorems~\ref{thm:cvar_safety} and~\ref{thm:w_safety}. In contrast, Vanilla-MCTS fails to reduce the CVaR below threshold $\tau$.

(iii) \textbf{Path distribution bar chart (Figure~\ref{fig:safe_path})} — this chart reports the proportion of trajectories with different risk levels (safe, mildly violating, severely violating). The proportion of safe trajectories under W-MCTS and CVaR-MCTS is significantly higher than that under Vanilla-MCTS.

(iv) \textbf{Tail loss density estimation (Figure~\ref{fig:cost_distribution})} — the kernel density curves illustrate the cost distribution on the worst 10\% of trajectories for each algorithm. It can be observed that the cost distributions experienced by CVaR-MCTS and W-MCTS are clearly lower than that of Vanilla-MCTS.

This experiment verifies the core capability of the two risk-constrained MCTS algorithms: maintaining performance while consistently satisfying the $\tau$-level tail-risk requirement. Among them, W-MCTS achieves the most concentrated tail-loss distribution under limited samples, establishing a reliable baseline for subsequent experiments in complex traffic scenarios.

\subsection{Complex Safety Scenarios: Overall Performance Comparison}
We compare the performance of different algorithms (Vanilla-MCTS, C-MCTS, Risk-Averse MCTS, CVaR-MCTS, W-MCTS, TRPO-Lagrange, CPPO) in several complex traffic simulation environments (Highway, Intersection, Racetrack, Roundabout).These tasks are implemented based on the \textit{Highway-env} simulator~\cite{highway-env}.
Highway-env is a two-dimensional multi-lane highway driving simulation environment (with 4 lanes by default and kinematics-based observations). All background vehicles follow the IDM model and drive at speeds ranging from 20 to 40 m/s, changing lanes when necessary. The agent (ego vehicle) operates with a discrete action set {\texttt{LANE\_LEFT}, \texttt{LANE\_RIGHT}, \texttt{FASTER}, \texttt{SLOWER}, \texttt{IDLE}}, and continuous control is also supported.
All experiments adopt a unified setting with a planning horizon of $H = 20$ steps and a tail-risk budget of $\tau = 0.2H$.
The evaluation metrics include average reward (Reward), risk-sensitive metric CVaR${0.9}$, average cost (Cost), and steps needed to reach 95\% convergence (Steps${95\%}$).

In this experiment, we normalize both the cost and reward at each step to a unified range of [0,1], to support a direct comparison between risk and reward on the same scale. Specifically, any "crash" is treated as the most severe cost (1.0), while other penalties (such as negative rewards from collisions or how far the vehicle goes off the lane) are linearly mapped based on their absolute values and summed up. The total cost is then clipped to [0,1]. Meanwhile, the immediate reward is made up of two parts: the ratio of current speed to maximum speed, and the environment’s “stay-in-lane” score. These are either directly added or weighted and then clipped to [0,1]. This method gives strong penalties for dangerous events while also providing detailed feedback on daily driving behavior, allowing risk and reward to be fairly balanced and smoothly integrated in later tree search and CVaR evaluation.

\begin{figure*}[h]
\centering
{\fontsize{10}{12}
  \subfigure[Highway CVaR]{\includegraphics[width=0.24\textwidth]{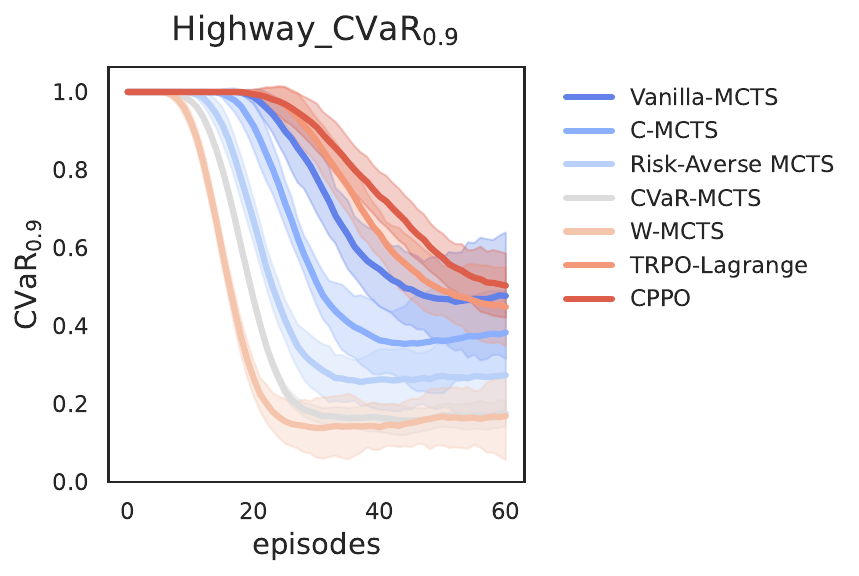}
  \label{fig:all_highway_cvar}
  }\hfill
  \subfigure[Intersection CVaR]{\includegraphics[width=0.24\textwidth]{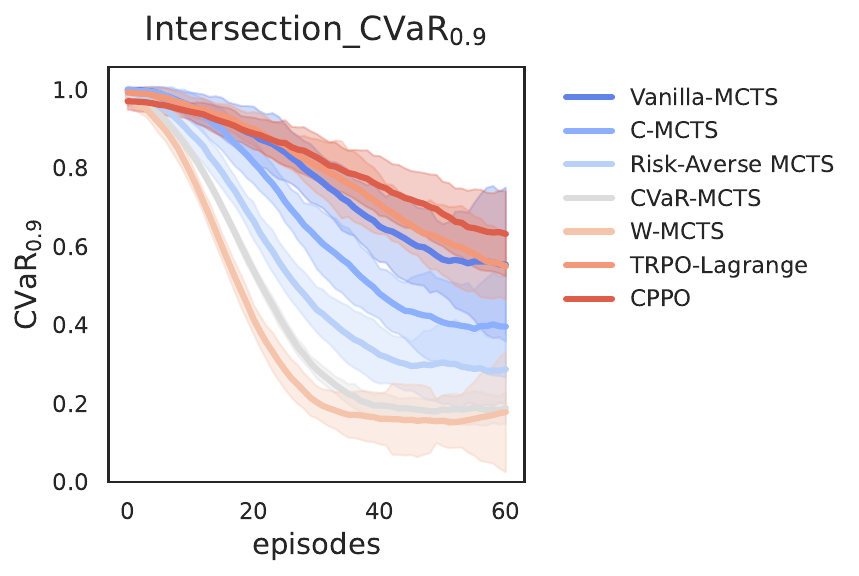}
  \label{fig:all_intersection_cvar}
  }\hfill
  \subfigure[Racetrack CVaR]{\includegraphics[width=0.24\textwidth]{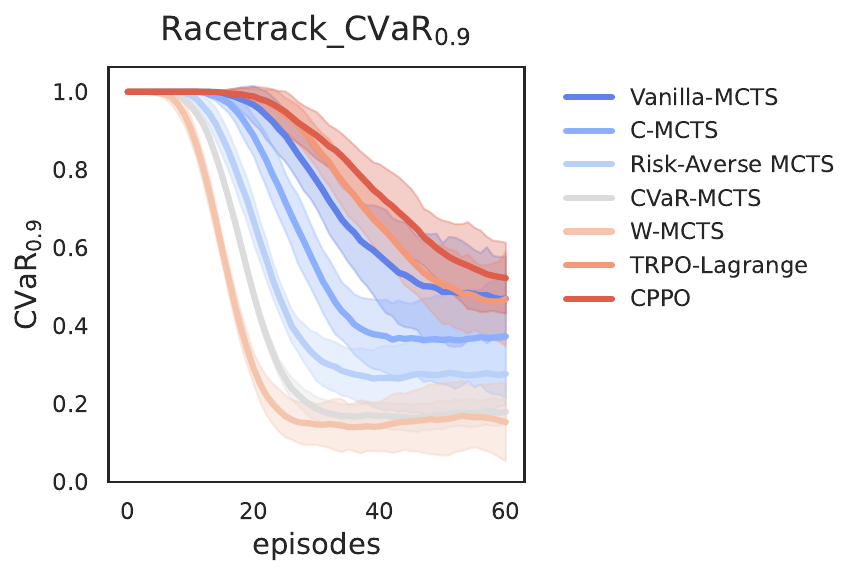}
  \label{fig:all_racetrack_cvar}
  }\hfill
  \subfigure[Roundabout CVaR]{\includegraphics[width=0.24\textwidth]{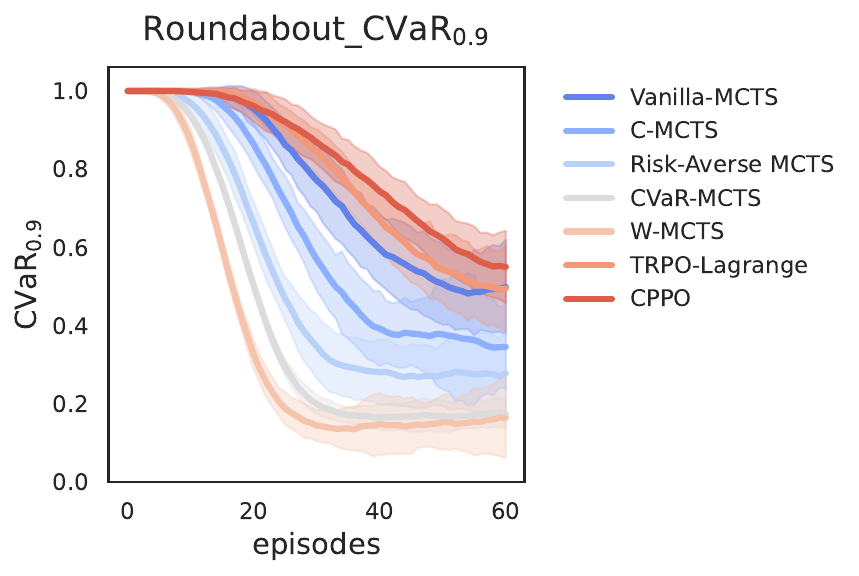}
  \label{fig:all_roundabout_cvar}
  }\hfill
  \caption{Performance trends for seven algorithms in four traffic environments. These show the 90\%‑CVaR of cumulative cost.  
  All curves are plotted as mean $\pm$ SEM across 30 independent runs.}
  \label{fig:all}
} 
\end{figure*}

\begin{table*}[h]
\centering
\label{tab:overall}
\begin{tabular}{llcccc}
\toprule\toprule
Algorithm & Env & Reward$\uparrow$ & CVaR$_{0.9}\downarrow$ & Cost$\downarrow$ & Steps$_{95\%}$$\downarrow$ \\
\midrule
C-MCTS & Highway & 0.535 ± 0.064 & 0.563 ± 0.023 & 0.448 ± 0.068 & 38.000000 \\
CPPO & Highway & 0.295 ± 0.059 & 0.788 ± 0.014 & 0.705 ± 0.050 & NaN \\
CVaR-MCTS & Highway & 0.763 ± 0.040 & 0.298 ± 0.016 & 0.221 ± 0.043 & 28.000000 \\
Risk-Averse MCTS & Highway & 0.666 ± 0.054 & 0.417 ± 0.021 & 0.313 ± 0.060 & 33.000000 \\
TRPO-Lagrange & Highway & 0.342 ± 0.060 & 0.739 ± 0.016 & 0.651 ± 0.054 & NaN \\
Vanilla-MCTS & Highway & 0.414 ± 0.076 & 0.695 ± 0.023 & 0.573 ± 0.074 & NaN \\
W-MCTS & Highway & \textbf{0.836 ± 0.045} & \textbf{0.236 ± 0.020} & \textbf{0.148 ± 0.047} & \textbf{23.0} \\ \midrule
C-MCTS & Roundabout & 0.533 ± 0.069 & 0.568 ± 0.026 & 0.452 ± 0.069 & 50.000000 \\
CPPO & Roundabout & 0.297 ± 0.062 & 0.789 ± 0.018 & 0.702 ± 0.051 & NaN \\
CVaR-MCTS & Roundabout & 0.758 ± 0.042 & 0.306 ± 0.017 & 0.228 ± 0.044 & 31.000000 \\
Risk-Averse MCTS & Roundabout & 0.662 ± 0.058 & 0.427 ± 0.023 & 0.321 ± 0.060 & 37.000000 \\
TRPO-Lagrange & Roundabout & 0.344 ± 0.065 & 0.750 ± 0.018 & 0.655 ± 0.056 & NaN \\
Vanilla-MCTS & Roundabout & 0.411 ± 0.083 & 0.710 ± 0.023 & 0.580 ± 0.077 & NaN \\
W-MCTS & Roundabout & \textbf{0.830 ± 0.046} & \textbf{0.240 ± 0.017} & \textbf{0.155 ± 0.046} & \textbf{27.0} \\ \midrule
C-MCTS & Intersection & 0.520 ± 0.076 & 0.593 ± 0.024 & 0.471 ± 0.070 & NaN \\
CPPO & Intersection & 0.307 ± 0.063 & 0.784 ± 0.020 & 0.694 ± 0.052 & NaN \\
CVaR-MCTS & Intersection & 0.738 ± 0.046 & 0.332 ± 0.016 & 0.253 ± 0.045 & 39.000000 \\
Risk-Averse MCTS & Intersection & 0.642 ± 0.064 & 0.453 ± 0.021 & 0.348 ± 0.060 & 50.000000 \\
TRPO-Lagrange & Intersection & 0.346 ± 0.069 & 0.751 ± 0.021 & 0.656 ± 0.055 & NaN \\
Vanilla-MCTS & Intersection & 0.405 ± 0.090 & 0.724 ± 0.028 & 0.593 ± 0.077 & NaN \\
W-MCTS & Intersection & \textbf{0.807 ± 0.051} & \textbf{0.270 ± 0.020} & \textbf{0.181 ± 0.049} & \textbf{36.0} \\ \midrule
C-MCTS & Racetrack & 0.536 ± 0.065 & 0.565 ± 0.025 & 0.449 ± 0.067 & 41.000000 \\
CPPO & Racetrack & 0.297 ± 0.060 & 0.788 ± 0.018 & 0.703 ± 0.051 & NaN \\
CVaR-MCTS & Racetrack & 0.760 ± 0.041 & 0.305 ± 0.017 & 0.226 ± 0.044 & 29.000000 \\
Risk-Averse MCTS & Racetrack & 0.664 ± 0.056 & 0.424 ± 0.022 & 0.318 ± 0.061 & 35.000000 \\
TRPO-Lagrange & Racetrack & 0.343 ± 0.064 & 0.742 ± 0.017 & 0.651 ± 0.054 & NaN \\
Vanilla-MCTS & Racetrack & 0.412 ± 0.079 & 0.702 ± 0.026 & 0.575 ± 0.076 & NaN \\
W-MCTS & Racetrack & \textbf{0.833 ± 0.045} & \textbf{0.240 ± 0.020} & \textbf{0.149 ± 0.047} & \textbf{24.0} \\
\bottomrule\bottomrule
\end{tabular}
\caption{Comparison of steady‑state performance and convergence speed across four traffic scenarios (Highway, Roundabout, Intersection, Racetrack) for seven algorithms. Metrics reported are average reward ($\uparrow$), 90\%‑CVaR ($\downarrow$), average cost ($\downarrow$), and steps to reach 95\% of final reward ($\downarrow$); values are mean $\pm$ std over 30 runs, with best results in bold.}
\label{tab:overall}
\end{table*}

Figure~\ref{fig:all} (in Appendix) and Table~\ref{tab:overall} show the performance trends of different algorithms over time. 
Taking the Highway environment as an example (Figure~\ref{fig:all}(a), (e), (i)), the W-MCTS algorithm quickly converges to a high and stable reward (around 0.836), while both cost and risk (CVaR) drop rapidly and stabilize at low levels (0.148 and 0.236, respectively). 
In contrast, the Vanilla-MCTS, TRPO-Lagrange, C-MCTS, Risk-Averse MCTS, and CPPO algorithms show much lower rewards, and their cost and CVaR remain high for a long duration, indicating clear instability.

The experimental results in the Intersection, Racetrack, and Roundabout environments show similar trends to the Highway environment (Table~\ref{tab:overall}). 
W-MCTS consistently demonstrates superior stability and performance, especially in reducing risk and cost. 
In addition, CVaR-MCTS also performs well, second only to W-MCTS, and clearly outperforms the other methods. 
This indicates that directly optimizing CVaR is indeed effective in reducing risk.

In summary, effectively balancing reward and risk control, CVaR-MCTS and W-MCTS significantly enhance safety and stability, with an additional advantage of W-MCTS due to its uncertainty modeling as it is evident from Table~\ref{tab:overall}.

\vspace{-0.1in}
\section{Conclusion and Future Work}
The proposed CVaR-MCTS and W-MCTS are the first to achieve both strict PAC tail-risk safety guarantees and sublinear regret bounds within Monte Carlo Tree Search. By incorporating Conditional Value-at-Risk into the UCT selection criterion and correcting finite-sample bias using first-order Wasserstein ambiguity sets, the algorithms manage to control extreme loss risks while maintaining a convergence rate comparable to that of the classical UCT.
Extensive experiments demonstrate that both algorithms significantly outperform existing risk-sensitive methods in terms of average return, CVaR, convergence speed, and stability, validating the effectiveness of the theoretical analysis. Future work will further
extend the approach to continuous action spaces and explore distributional models to improve tail-risk estimates under more challenging non-stationary environments.

\medskip

{
\small
\bibliography{ref}
\bibliographystyle{unsrtnat} 
}

\clearpage
\newpage
\appendix

\section{Proof}
\subsection{Proof of Theorem~\ref{thm:cvar_safety}}

\begin{proof}
First, we leverage the following conic representation of CVaR~\cite{rockafellar2000optimization}.
\begin{equation}
\label{eqn:cvar_2}
\begin{aligned}
\text{CVaR}_{\alpha}(Z) = \min_{v\in\mathbb{R}}\left\{v+\frac{1}{1-\alpha}\mathbb{E}\left[\left(Z-v\right)^{+}\right]\right\}
\end{aligned}
\end{equation}

where $(x)^{+} = \max\{x, 0\}$. Accordingly, the empirical estimate for the samples $\{Z_{i}\}_{i=1}^{N}$ is given by:

\begin{equation}
\label{eqn:cvar_3}
\begin{aligned}
    \widehat{\text{CVaR}}_{\alpha}(Z) = \min_{v\in\mathbb{R}}\left\{v+\frac{1}{(1-\alpha)N}\sum_{i=1}^{N}\left(Z_i-v\right)^{+}\right\}
\end{aligned}
\end{equation}
For the cost dimension $k$ , the tail cost can be obtained as:
\begin{equation}
\begin{aligned}
    Z_{i} = \frac{1}{N} \sum_{t=0}^{N-1} c^{(k)}(s_t^{(i)},a_{t}^{(i)})\in[0,1]
\end{aligned}
\end{equation}

By Hoeffding’s inequality, for any $\varepsilon > 0$, it holds that:
\begin{equation}
\begin{aligned}
    \Pr \left[\left|\frac{1}{N}\sum_{i=1}^{N}\left(Z_i-v\right)^{+} -\mathbb{E}\left[\left(Z-v\right)^{+}\right] \right| \geq \varepsilon\right] \leq 2 \exp(-2 N \varepsilon^2)
\end{aligned}
\end{equation}

To ensure that the overflow probability across all $K$ components is controlled within $\delta$, set $\varepsilon = \beta_{C}\sqrt{\ln(2K/\delta)\,/\, (2N)}$ and let $N = H$. Then:
\begin{equation}
\begin{aligned}
&2\exp(-2N\varepsilon^2)\\
=&2\exp(-2N \cdot \frac{\beta_{C}^{2}\ln(\frac{2K}{\delta})}{2N})\\
=&2\exp(-\beta_{C}^{2}\ln \frac{2K}{\delta})\\
=&2(\frac{2K}{\delta})^{-\beta_{C}^2}
\end{aligned}
\end{equation}

Since $\beta_{C}^{2} = 2$, we obtain:
\begin{equation}
\begin{aligned}
2(\frac{2K}{\delta})^{-\beta_{C}^2}=\frac{2\delta^2}{(2K)^2}=\frac{\delta^2}{2K^2}
\end{aligned}
\end{equation}

So
\begin{equation}
\begin{aligned}
 \Pr \left[\left|\frac{1}{N}\sum_{i=1}^{N}\left(Z_i-v\right)^{+} -\mathbb{E}\left[\left(Z-v\right)^{+}\right] \right| 
\geq  \beta_{C}\sqrt{\frac{\ln(2K/\delta)}{2N}}\right]
\leq \frac{\delta^2}{2K^2}
\end{aligned}
\end{equation}

From the minimization forms of Eqn~\ref{eqn:cvar_2} and Eqn~\ref{eqn:cvar_3},
and noting that for any fixed $v \in \mathbb{R}$, the following inequality holds:
\begin{equation}
\begin{aligned}
\min_vf(v) \leq f(v), \min_{v} \hat{f}(v) \leq \hat{f}(v)
\end{aligned}
\end{equation}
Taking the difference and then the absolute value, and exchanging the order for the $\pm$ directions respectively, we obtain:
\begin{equation}
\begin{aligned}
\left|\min_vf(v) - \min_{v} \hat{f}(v)\right| \leq \sup_{v}\left|f(v) - \hat{f}(v)\right| 
\end{aligned}
\end{equation}
Finally, substituting this into the expression, we obtain:
\begin{equation}
\label{eqn:cvar_diff}
\begin{aligned}
\bigl|\widehat{\text{CVaR}}_{\alpha}(Z)-\text{CVaR}_{\alpha}(Z)\bigr|
\;\le\;
\frac{1}{1-\alpha}\,
\max_{v\in\mathbb{R}}
\left| \frac1N\sum_{i=1}^N\!\bigl(Z_i-v\bigr)^{+}
-\mathbb{E}\!\left[\bigl(Z-v\bigr)^{+}\right]
\right|  
\end{aligned}
\end{equation}

Note that $(1 - \alpha)^{-1} \ge 1$, so this amplification factor must be retained when applying the Hoeffding inequality. The resulting probability bound is updated as:
\begin{equation}
\begin{aligned}
\Pr\!\left[
\left|\widehat{\text{CVaR}}_{\alpha}^{(k)}
-\text{CVaR}_{\alpha}^{(k)}\right|
\;\ge\;
\frac{\beta_C}{1-\alpha}\sqrt{\frac{\ln(2K/\delta)}{2H}}
\right]
\;\le\;
\frac{\delta^{2}}{2K^{2}}.
\end{aligned}
\end{equation}

We further obtain:

\begin{equation}
\begin{aligned}
\Pr\!\left[
\widehat{\text{CVaR}}_{\alpha}^{(k)}
+
\frac{\beta_C}{1-\alpha}\sqrt{\frac{\ln(2K/\delta)}{2H}}
<
\text{CVaR}_{\alpha}^{(k)}
\right]
\;\le\;
\frac{\delta^{2}}{2K^{2}}.
\end{aligned}
\end{equation}
Finally, we apply the union bound.
\begin{equation}
\begin{aligned}
\Pr\!\Bigl[
\forall k:\;
\text{CVaR}_{\alpha}^{(k)}
\le
\widehat{\text{CVaR}}_{\alpha}^{(k)}
+
\frac{\beta_C}{1-\alpha}\sqrt{\frac{\ln(2K/\delta)}{2H}}
\Bigr]
\;\ge\;
1-\delta.
\end{aligned}
\end{equation}

If the algorithm guarantees at each node of the search tree that:
\begin{equation}
\begin{aligned}
\widehat{\text{CVaR}}_{\alpha}^{(k)}\;+\;
\frac{\beta_C}{1-\alpha}\sqrt{\frac{\ln(2K/\delta)}{2H}}
\;\le\;\tau_k,
\end{aligned}
\end{equation}

Then the true CVaR constraint $\text{CVaR}_{\alpha}^{(k)} \le \tau_k$ holds with confidence level $1 - \delta$.
To achieve the same confidence level, the required sample size (or simulation depth) must be correspondingly scaled up to:

\begin{equation}
\begin{aligned}
H \;\ge\;
\frac{2\beta_C^{2}}{(1-\alpha)^{2}}
\cdot
\frac{\ln(2K/\delta)}{\varepsilon^{2}},
\end{aligned}
\end{equation}
where $\varepsilon$ denotes the allowable upper bound on the estimation error.

\end{proof}

\subsection{Proof of Theorem~\ref{thm:param_convergence}}
\begin{proof}
Here, we write the update as:
\begin{equation}
\begin{aligned}
\boldsymbol{\lambda}_{t+1} = \left[\boldsymbol{\lambda}_t + \eta_t(g(\boldsymbol{\lambda}_t)+\xi_t)\right]_{+}
\end{aligned}
\end{equation}
Where $\xi_t = \hat{g}(\boldsymbol{\lambda}_t) - g(\boldsymbol{\lambda}_t)$ denotes the zero-mean noise.

To prove convergence, we need to verify the Robbins–Monro conditions.
First, due to the unbiasedness of trajectory sampling, we have $\mathbb{E}\left[\hat{g}(\lambda_t)\,|\,\lambda_t\right] = g(\lambda_t)$. Therefore, $\mathbb{E}\left[\xi_t\,|\,\mathcal{F}_t\right] = 0$, where $\xi_t$ denotes the sampling noise.

Moreover, since the one-step cost lies within $[0,1]$, the cumulative CVaR is also bounded. Therefore, there exists a constant $M < \infty$ such that
$
\mathbb{E}\left[\|\xi_t\|^2 \mid \mathcal{F}_t\right] \leq M.
$

Since the dual objective $\mathcal{L}(\lambda)$ is convex and its gradient is given by the dual representation of CVaR (a convex combination), it can be shown that there exists $L > 0$ such that
$
\|g(\lambda) - g(\lambda^{\prime})\| \leq L \|\lambda - \lambda^{\prime}\|.
$
Therefore, Lipschitz continuity can be verified.

According to stochastic approximation theory, under the above conditions, the projected stochastic approximation iteration almost surely converges to the dual optimal solution $\lambda^{*}$. Therefore, we need to provide an error analysis.

Let $\Delta_t = \boldsymbol{\lambda}_t - \boldsymbol{\lambda}^{*}$. By the non-expansiveness property of projection, we have:
\begin{equation}
\begin{aligned}
\|\Delta_{t+1} \|^{2} &\leq \|\Delta_t + \eta_t(g(\boldsymbol{\lambda_t})+\xi_t) \|^2\\
&=\|\Delta_{t} \|^{2} + 2\eta_t\Delta_t^{\top}g(\boldsymbol{\lambda_t}) + 2\eta_t\Delta_t^{\top}\xi_t + \eta_t^2\|g(\boldsymbol{\lambda_t})+\xi_t\|^2\\
&\leq \|\Delta_{t} \|^{2} + 2\eta_t\Delta_t^{\top}g(\boldsymbol{\lambda_t}) + 2\eta_t\Delta_t^{\top}\xi_t  + \eta_t^2(2\|g(\boldsymbol{\lambda_t})\|^2 + 2\|\xi_t^2\|)\\
&\leq \|\Delta_{t} \|^{2} + 2\eta_t\Delta_t^{\top}g(\boldsymbol{\lambda_t}) + 2\eta_t\Delta_t^{\top}\xi_t + \eta_t^2(G^2+2\|\xi_t\|^2)
\end{aligned}
\end{equation}
where $G$ is a bounded constant.

Since $D(\lambda)$ is convex and $\lambda^{*}$ is the optimal solution, it follows from the inner product bound under convexity that:
\begin{equation}
\begin{aligned}
&\Delta_t^{\top}g(\boldsymbol{\lambda_t})\\
=& (\boldsymbol{\lambda}_t - \boldsymbol{\lambda}^{*})^{\top} \nabla D(\boldsymbol{\lambda}_t)\\
\geq & D(\boldsymbol{\lambda_t}) -D(\boldsymbol{\lambda}^{*})\\
\geq & 0
\end{aligned}
\end{equation}

Since the noise term is unbiased, we have
$
\mathbb{E}\left[\Delta_t^{\top}\,\xi_t \,\big|\, \mathcal{F}_t\right] = 0.
$

Taking the conditional expectation of the above expansion and applying the unbiasedness and step-size conditions:
\begin{equation}
\begin{aligned}
\mathbb{E}[\|\Delta_{t+1}\|^2|\mathcal{F}_t] \leq \|\Delta_t\|^2 + \eta_t^2(G^2+2M)
\end{aligned}
\end{equation}

More precisely, by recursion:
\begin{equation}
\begin{aligned}
    \mathbb{E}[\|\Delta_{t+1}\|^2] \leq (1)\mathbb{E}[\|\Delta_t\|^2] + C\eta^2
\end{aligned}
\end{equation}

Let $\eta_t = t^{-\gamma}$ with $\gamma \in (1/2, 1]$. It can be derived that
$
\mathbb{E}\bigl[\|\Delta_t\|^2\bigr] = \mathcal{O}\bigl(t^{-\min(2\gamma -1,\,1)}\bigr).
$Taking the commonly used $\gamma = 1$, we have:

\begin{equation}
\begin{aligned}
    \mathbb{E}[\|\Delta_t\|^2] = \mathcal{O}(t^{-1})\Rightarrow \|\Delta_t\|_1 \leq \sqrt{K}\|\Delta_t\|_2 = \mathcal{O}(t^{-1/2})
\end{aligned}
\end{equation}

\end{proof}

\subsection{Proof of Theorem~\ref{thm:regret_cvar}}

\begin{proof}
Let $\mu(s,a) = \mathbb{E}[R_H \mid (s,a)]$ denote the true expected return starting from node $(s,a)$ and following the true optimal policy $\pi^\star$, where $R_H = \sum_{t=0}^{H-1} r(s_t, a_t)$.
Let the optimal action be denoted as $a^\star(s) = \arg\max_a \mu(s, a)$, and define the gap $\Delta(s, a) = \mu(s, a^\star(s)) - \mu(s, a)$, where $\Delta > 0$ if and only if $a \neq a^\star(s)$.
Define the cumulative time steps $T$ as the total number of rollouts (which also corresponds to the number of visits to the root node).
Let $N_t(s)$ and $N_t(s, a)$ denote the number of times node $s$ and edge $(s,a)$ have been visited before time $t$, respectively.

First, we present the result based on the Hoeffding inequality.
\begin{equation}
\begin{aligned}
\Pr\left(|\hat{Q}_t(s,a) - \mu(s,a)|>\beta_{R}\sqrt{\frac{\ln N_t(s)}{1+N_t(s,a)}}\right) \leq N_t(s)^{-2}
\end{aligned}
\end{equation}
By setting $\beta_R = \sqrt{2}$ and applying a union bound over all $(s,a)$ pairs and all time steps, we obtain the event $\mathcal{E}_R$ 
\begin{equation}
\begin{aligned}
\mathcal{E}_R =\left\{\forall t, \forall(s,a):\left|\hat{Q}_{t}(s,a) - \mu(s,a)\right|\leq \beta_R\sqrt{\frac{\ln N_t(s)}{1+N_t(s,a)}}\right\}  
\end{aligned}
\end{equation}
which holds with probability at least $\Pr(\mathcal{E}_R) \ge 1 - \tfrac{\pi^2}{3T^2}$.

Similarly, for single-step costs $c \in [0,1]$, Hoeffding’s inequality yields the following bound for the CVaR estimate:
\begin{equation}
\begin{aligned}
\Pr\left(|\widehat{\text{CVaR}}_t(s,a) - \text{CVaR}_{\alpha}(s,a)|>\beta_{R}\sqrt{\frac{\ln N_t(s)}{1+N_t(s,a)}}\right) \\ \leq N_t(s)^{-2}
\end{aligned}
\end{equation}

By setting $\beta_C = \sqrt{2}$, we define the event $\mathcal{E}_C$;
taking the union of the two events yields:

\begin{equation}
\begin{aligned}
\Pr(\mathcal{E}_R\cap \mathcal{E}_C) \geq 1 - \frac{2\pi^2}{3T^2}
\end{aligned}
\end{equation}
Subsequently, we perform the analysis within this high-probability event, and finally account for the failure probability separately.

We first focus on a single node, and then sum up the regret over all nodes.

Within the event $\mathcal{E}_R \cap \mathcal{E}_C$, whenever the algorithm selects a suboptimal action $a \neq a^\star(s)$ at node $s$, it must satisfy:

\begin{equation}
\begin{aligned}
    U_t(s,a) \geq U_t(s,a^{*}(s))
\end{aligned}
\end{equation}

In our algorithm, both the Lagrange multiplier $\boldsymbol{\lambda}_s$ and the remaining budget $\textbf{B}_s$ at node $s$ are stored and updated in a "node-level" manner: after each rollout, we perform projected stochastic gradient updates only on the $\boldsymbol{\lambda}_s$ corresponding to the visited node slot. This update is based on the tail-risk of the entire path starting from $s$ and does not involve the first-step action. Meanwhile, $\textbf{B}_s$ is solely determined by the historical cost from the root to $s$, and is likewise independent of subsequent action choices. Therefore, when comparing $U_t(s,a)$ and $U_t(s,a^{*}(s))$, the terms $\boldsymbol{\lambda}_s$ and $\textbf{B}_s$ take exactly the same values on both sides of the inequality and cancel out. The remaining difference comes solely from the individual estimates of $Q$ and $\hat{\text{CVaR}}$ for each action.
Expanding, we obtain:

\begin{equation}
\begin{aligned}
&\hat{Q}_t(s,a) + \beta_R\sqrt{\frac{\ln N_t(s)}{1+N_t(s,a)}} - \boldsymbol{\lambda}_s^{\top}\left(\hat{\text{CVaR}}_t(s,a) + \beta_C\sqrt{\frac{\ln N_t(s)}{1+N_t(s,a)}})\textbf{1} - \textbf{B}_s\right)\\
\geq& \hat{Q}_t(s,a^*) + \beta_R\sqrt{\frac{\ln N_t(s)}{1+N_t(s,a)}} - \boldsymbol{\lambda}_s^{\top}\left(\hat{\text{CVaR}}_t(s,a^{*}) + \beta_C\sqrt{\frac{\ln N_t(s)}{1+N_t(s,a^{*})}})\textbf{1} - \textbf{B}_s\right)
\end{aligned}
\end{equation}

After canceling out the common term $-\boldsymbol{\lambda}_s^\top(-\mathbf{B}_s)$, we obtain:
\begin{equation}
\begin{aligned}
&\hat{Q}_t(s,a) - \hat{Q}_t(s,a^{*}) \geq -\beta_R\Delta_R + \boldsymbol{\lambda}_s^{\top} \left(\hat{\text{CVaR}}_t(s,a) - \hat{\text{CVaR}}_t(s,a^{*}) + \beta_C\Delta_C\right)
\end{aligned}
\end{equation}
where $\Delta_R = \sqrt{\frac{\ln N_t(s)}{1+ N_t(s,a)}} - \sqrt{\frac{\ln N_t(s)}{1+N_t(s,a^*)}}$, $\Delta_C = \sqrt{\frac{\ln N_t(s)}{1+ N_t(s,a)}} - \sqrt{\frac{\ln N_t(s)}{1+N_t(s,a^*)}}$

According to the Hoeffding event $\mathcal{E}_R \cap \mathcal{E}_C$,
\begin{equation}
\begin{aligned}
&\left|\hat{Q}_t(s,a) - \mu(s,a)\right| \leq \beta_R\sqrt{\frac{\ln N_t(s)}{1+N_t(s,a)}}\\
&\left|\hat{\text{CVaR}}_t(s,a) - \text{CVaR}_{\alpha}(s,a)\right|\leq \beta_C \sqrt{\frac{\ln N_t(s)}{1+N_t(s,a)}}
\end{aligned}
\end{equation}

Substituting in and simplifying, we obtain an upper bound on the gap in true expected returns:

\begin{equation}
\begin{aligned}
&\mu(s,a^{*}) - \mu(s,a)\\ 
=& [\mu(s,a^{*})-\hat{Q}_t(s,a^{*})] + [\hat{Q}_t(s,a^{*}) - \hat{Q}_t(s,a)] + [\hat{Q}_t(s,a) -\mu(s,a)]\\
\leq& 2\beta_R\sqrt{\frac{\ln N_t(s)}{1+ N_t(s,a)}} + \beta_R \Delta_R - \boldsymbol{\lambda}_s^{\top}(\hat{\text{CVaR}}_{\alpha,t}(s,a) - \hat{\text{CVaR}}_{\alpha,t}(s,a)+\beta_C\Delta_C) \\
\leq & 3\beta_R\sqrt{\frac{\ln N_t(s)}{1+ N_t(s,a)}} -\boldsymbol{\lambda}_s^{\top}(\hat{\text{CVaR}}_{\alpha,t}(s,a) - \hat{\text{CVaR}}_{\alpha,t}(s,a)+\beta_C\Delta_C)\\
\leq & 3\beta_R\sqrt{\frac{\ln N_t(s)}{1+ N_t(s,a)}} +||\boldsymbol{\lambda}_s||_1||\hat{\text{CVaR}}_{\alpha,t}(s,a) - \hat{\text{CVaR}}_{\alpha,t}(s,a)+\beta_C\Delta_C||_{\infty}  \\
\leq& 3\beta_R\sqrt{\frac{\ln N_t(s)}{1+ N_t(s,a)}} + ||\boldsymbol{\lambda}_s||_1 \cdot 2\beta_C\sqrt{\frac{\ln N_t(s)}{1+N_t(s,a)}}\\
=& 2\left(\frac{3}{2}\beta_R + \||\boldsymbol{\lambda}_s||_1 \beta_C \right)\sqrt{\frac{\ln N_t(s)}{1+N_t(s,a)}}\\
=& 2 \hat{\beta}\sqrt{\frac{\ln N_t(s)}{1+N_t(s,a)}}
\end{aligned}
\end{equation}

which is:
\begin{equation}
\begin{aligned}
N_t(s,a)\leq \frac{4\hat{\beta}^2 \ln N_t(s)}{\Delta(s,a)^2}
\end{aligned}
\end{equation}

For any time step $t$, we have $N_t(s) \le T$; therefore,

\begin{equation}
\begin{aligned}
N_T(s,a)\leq \frac{4\hat{\beta}^2 }{\Delta(s,a)^2}\ln T
\end{aligned}
\end{equation}
This yields an upper bound on the number of visits to the suboptimal edge $(s, a)$.
The entire search tree contains at most $|\mathcal{S}| \times |\mathcal{A}|$ edges, but due to the depth limit $H$, any single rollout can visit at most $H$ edges.
By summing the number of visits to all suboptimal edges across all nodes, we obtain:
\begin{equation}
\begin{aligned}
\sum_{s}\sum_{a\neq a^{*}(s)}N_T(s,a)\leq 4\hat{\beta}^2 \ln T \sum_{s}\sum_{a\neq a^{*}(s)}\frac{1}{\Delta(s,a)^2}
\end{aligned}
\end{equation}
In classical analysis, the right-hand side is treated as a constant $\kappa$ (which depends only on the problem instance, not on $T$).
As a result, the total number of suboptimal selections is bounded by $\kappa \ln T$.

Let the per-rollout regret be at most $H$; then,
\begin{equation}
\begin{aligned}
\mathcal{R}_T = \mathbb{E}\left[\sum_{t=1}^{T}(\mu^{*}- R_t)\right]\leq H\cdot (\kappa \ln T)=\mathcal{O}(H\ln T)
\end{aligned}
\end{equation}

However, we also need to account for the additional optimistic bias introduced by the risk-related confidence term:
each time a suboptimal edge is visited, the reward side receives an inflated bonus of at most
$
\hat{\beta} \sqrt{ \tfrac{\ln N_t(s)}{1 + N_t(s,a)} } \le \hat{\beta} \sqrt{ \ln T }.
$
After at most $\kappa \ln T$ such visits, the cumulative contribution is bounded by
$
\hat{\beta} \sqrt{\ln T} \cdot \kappa \ln T = \kappa \hat{\beta} (\ln T)^{3/2}.
$

Combining this term with the previous regret from suboptimal selections, and adding at most an additional loss of $H$ due to the failure event with probability $2\pi^2 / (3T)$, we obtain:
\begin{equation}
\begin{aligned}
\mathcal{R}_T \leq H \kappa \ln T + \kappa \hat{\beta} (\ln T)^{3/2} +H \cdot \frac{2\pi^2}{3}
\end{aligned}
\end{equation}

When $T$ is sufficiently large, we have $(\ln T)^{3/2} \le \sqrt{T \ln T}$.
By consolidating the constants, we arrive at the stated result.

\begin{equation}
\begin{aligned}
\mathcal{R}_T \leq c \sqrt{T\ln T}, c =\mathcal{O}(\beta_R+ \beta_CK + H)
\end{aligned}
\end{equation}

Where the term $\beta_C K$ arises because the risk confidence bound is amplified by the Lagrange multiplier $|\boldsymbol{\lambda}_s|_1$, which can be at most $K$.
This affects only the constant coefficient, not the asymptotic order.

\end{proof}

\subsection{Proof of Theorem~\ref{thm:w_safety}}
\begin{proof}
We need to prove three key lemmas.
\begin{Lemma}[Wasserstein-Lipschitz Property of CVaR]
\label{lem:cvar_w_lipschitz}
Let the random variable $Z \in [0,1]$ have distributions $P$ and $Q$, respectively. Given a confidence level $\alpha \in (0,1)$, we have:
\begin{equation}
\begin{aligned}
    \left|\text{CVaR}_{\alpha}^{P}(Z) - \text{CVaR}_{\alpha}^{Q}(Z)\right| \leq \frac{1}{1-\alpha}W_1(P,Q) =: L_C W_1(P,Q)
\end{aligned}
\end{equation}
\end{Lemma}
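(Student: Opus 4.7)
The plan is to exploit the Rockafellar--Uryasev variational representation of CVaR in equation~(\ref{eqn:cvar_2}), namely $\text{CVaR}_{\alpha}^{P}(Z)=\min_{v\in\mathbb{R}}\bigl\{v+\tfrac{1}{1-\alpha}\mathbb{E}_{P}[(Z-v)^{+}]\bigr\}$, and to reduce the claim to a uniform control on the expectation gap $\mathbb{E}_{P}[(Z-v)^{+}]-\mathbb{E}_{Q}[(Z-v)^{+}]$ across all thresholds $v$. First I would observe that for every fixed $v$ the hinge map $\phi_{v}(z)=(z-v)^{+}$ is $1$-Lipschitz in $z$, since $|(z-v)^{+}-(z'-v)^{+}|\le|z-z'|$ by the contractivity of the positive part.

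Next I would invoke the Kantorovich--Rubinstein dual characterization of the first-order Wasserstein distance: for any $1$-Lipschitz $f$, $|\mathbb{E}_{P}[f(Z)]-\mathbb{E}_{Q}[f(Z)]|\le W_{1}(P,Q)$. Applying this with $f=\phi_{v}$ yields, \emph{uniformly in} $v$, the bound
\begin{equation*}
\bigl|\mathbb{E}_{P}[(Z-v)^{+}]-\mathbb{E}_{Q}[(Z-v)^{+}]\bigr|\le W_{1}(P,Q).
\end{equation*}
I would then use the elementary ``$\min$ vs.\ $\min$'' inequality already exploited in the proof of Theorem~\ref{thm:cvar_safety} (see~\eqref{eqn:cvar_diff}): for any two real-valued functions $f,g$ on $\mathbb{R}$, $|\min_{v}f(v)-\min_{v}g(v)|\le\sup_{v}|f(v)-g(v)|$. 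Setting $f(v)=v+\tfrac{1}{1-\alpha}\mathbb{E}_{P}[(Z-v)^{+}]$ and $g(v)=v+\tfrac{1}{1-\alpha}\mathbb{E}_{Q}[(Z-v)^{+}]$, the linear $v$ term cancels inside the supremum, and pulling the factor $\tfrac{1}{1-\alpha}$ outside gives
\begin{equation*}
\bigl|\text{CVaR}_{\alpha}^{P}(Z)-\text{CVaR}_{\alpha}^{Q}(Z)\bigr|\le\frac{1}{1-\alpha}\sup_{v\in\mathbb{R}}\bigl|\mathbb{E}_{P}[(Z-v)^{+}]-\mathbb{E}_{Q}[(Z-v)^{+}]\bigr|\le\frac{W_{1}(P,Q)}{1-\alpha},
\end{equation*}
identifying the Lipschitz constant $L_{C}=1/(1-\alpha)$.

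The only real subtlety, and the step I would be most careful with, is justifying that the supremum over $v\in\mathbb{R}$ is actually achieved (or at least is finite) so that the Kantorovich--Rubinstein bound applies uniformly rather than only pointwise. Because $Z\in[0,1]$, both optimizers lie in the compact set $v\in[0,1]$: for $v<0$ replacing $v$ by $0$ strictly decreases the objective, and for $v>1$ the positive part vanishes, making the objective linear and increasing. Restricting the $\min$ in~(\ref{eqn:cvar_2}) to $v\in[0,1]$ is therefore without loss of generality, and on this compact interval the family $\{\phi_{v}\}$ is uniformly $1$-Lipschitz, so the single Kantorovich--Rubinstein bound $W_{1}(P,Q)$ controls all terms simultaneously. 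With this observation the rest of the argument is a short three-line calculation, and no further assumptions on $P,Q$ (beyond having finite first moments, guaranteed by bounded support) are needed.
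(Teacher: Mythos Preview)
Your proposal is correct and follows essentially the same route as the paper's own proof: both use the Rockafellar--Uryasev variational form of CVaR, the $1$-Lipschitz property of the hinge map, Kantorovich--Rubinstein duality, and the $|\min f-\min g|\le\sup|f-g|$ inequality. The only cosmetic difference is that the paper absorbs the factor $1/(1-\alpha)$ into the function $f_{\mu}(z)=\mu+\tfrac{1}{1-\alpha}(z-\mu)_{+}$ (making it $\tfrac{1}{1-\alpha}$-Lipschitz) before applying KR, whereas you factor it out first; and your closing paragraph on restricting $v$ to $[0,1]$ is harmless but unnecessary, since the KR bound $W_{1}(P,Q)$ is independent of $v$ and therefore already controls the supremum without any compactness argument.
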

\begin{proof}
(Proof of Lemma~\ref{lem:cvar_w_lipschitz})
We can write the dual representation of CVaR as:
\begin{equation}
\begin{aligned}
    \text{CVaR}_{\alpha}^{P}(Z) = \min_{\mu\in \mathbb{R}}\left\{\mu + \frac{1}{1-\alpha}\mathbb{E}_{P}\left[(Z-\mu)_{+}\right]\right\}
\end{aligned}
\end{equation}
Therefore, we can construct a family of Lipschitz functions by setting:
\begin{equation}
\begin{aligned}
    f_{\mu}(z) := \mu + \frac{1}{1-\alpha}(z-\mu)_{+}
\end{aligned}
\end{equation}
For $\forall z_1,z_2$, we have $|f_{\mu}(z_1) - f_{\mu}(z_2)| = \frac{1}{1-\alpha}|(z_1 - \mu)_{+} - (z_2-\mu)_{+}|\leq \frac{1}{1-\alpha}|z_1-z_2|$. So Lipschitz const number is $L_C = \frac{1}{1-\alpha}$

According to the Kantorovich–Rubinstein duality formula, we obtain:
\begin{equation}
\label{eqn:w-dual}
\begin{aligned}
    W_1 (P,Q) = \sup_{g:\|g\|_{\text{Lip}}\leq 1} \left|\mathbb{E}_{P}g(Z) -\mathbb{E}_{Q} g(Z)\right|
\end{aligned}
\end{equation}

Therefore, for each $\mu$, we have:

\begin{equation}
\begin{aligned}
    &\left|\mathbb{E}_{P}f_\mu - \mathbb{E}_{Q}f_\mu\right| = \frac{1}{1-\alpha}\left|\mathbb{E}_{P}g_\mu - \mathbb{E}_{Q}g_\mu\right|\\
\leq &\frac{1}{1-\alpha}W_1 (P,Q) = L_C W_1(P,Q)
\end{aligned}
\end{equation}

We set $F_{P}(\mu):= \mathbb{E}_{P}f_\mu(Z),F_{Q}(\mu):= \mathbb{E}_{Q}f_\mu(Z)$.

So we can get:
\begin{equation}
\begin{aligned}
    \text{CVaR}_{\alpha}^{P}(Z) = \inf_{\mu} F_{P}(\mu),\text{CVaR}_{\alpha}^{Q}(Z) = \inf_{\mu} F_{Q}(\mu)
\end{aligned}
\end{equation}

So
\begin{equation}
\begin{aligned}
&\left|\text{CVaR}_{\alpha}^{P}(Z) - \text{CVaR}_{\alpha}^{Q}(Z)\right|\\
=&\left|\inf_{\mu} F_{P}(\mu) - \inf_{\mu} F_{Q}(\mu)\right|\\
\leq & \sup_{\mu}|F_P(\mu)-F_Q(\mu)| \\
\leq & L_C W_1(P,Q) \leq \frac{1}{1-\alpha} W_1 (P,Q)
\end{aligned}
\end{equation}

\end{proof}

\begin{Lemma}
(Wasserstein Concentration from Empirical Distribution to True Distribution)
\label{lem:w_coverage}
For i.i.d. random variables $Z_1, \ldots, Z_T \in [0,1]$, let the true distribution be $P$ and the empirical distribution be $\hat{P}_{T} := \frac{1}{T}\sum_{i=1}^{T}\delta_{Z_i}$. Then, for any $\varepsilon > 0$:
\begin{equation}
\begin{aligned}
    \Pr\left[W_1(P,\hat{P}_{T})>\varepsilon\right] \leq 2\exp(-2T\varepsilon^2)
\end{aligned}
\end{equation}
\end{Lemma}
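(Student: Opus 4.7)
The plan is to reduce this Wasserstein concentration inequality to a Kolmogorov-Smirnov-type uniform deviation bound on cumulative distribution functions, exploiting that the samples are one-dimensional and supported on the compact interval $[0,1]$. This turns what looks like an optimal-transport problem into an elementary application of a classical DKW-style concentration bound.

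The first step would rewrite $W_1$ via CDFs. I would invoke the standard one-dimensional representation $W_1(P,Q) = \int_{\mathbb{R}} |F_P(x)-F_Q(x)|\,dx$, which can be derived from the Kantorovich--Rubinstein duality of Eq.~\eqref{eqn:w-dual} combined with the inverse-CDF coupling, or directly from the layer-cake identity applied to a $1$-Lipschitz witness. Because each $Z_i \in [0,1]$, both $F_P$ and $\hat{F}_T$ equal $0$ on $(-\infty,0)$ and $1$ on $[1,\infty)$, so the integral collapses to one over the unit interval and may be bounded by its supremum:
\begin{equation*}
W_1(P, \hat{P}_T) \;=\; \int_0^1 |F_P(x) - \hat{F}_T(x)|\,dx \;\leq\; \sup_{x\in[0,1]} |F_P(x) - \hat{F}_T(x)|,
\end{equation*}
where the final step uses that the integration domain has length one. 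The second step would apply the Dvoretzky--Kiefer--Wolfowitz inequality in its sharp Massart form: for i.i.d.\ $Z_1,\ldots,Z_T \sim P$, $\Pr[\sup_x |F_P(x)-\hat{F}_T(x)| > \varepsilon] \leq 2\exp(-2T\varepsilon^2)$. Chaining the two steps yields the event inclusion $\{W_1(P,\hat{P}_T)>\varepsilon\} \subseteq \{\sup_x |F_P-\hat{F}_T| > \varepsilon\}$, and hence $\Pr[W_1(P,\hat{P}_T)>\varepsilon] \leq 2\exp(-2T\varepsilon^2)$, matching the claim exactly.

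The main obstacle is securing the sharp prefactor $2$ and the exponent $-2T\varepsilon^2$ simultaneously: these are attained only through Massart's tightening of the original DKW bound (whose constant was unspecified), so one must take care to invoke the correct version or the right-hand side would match only up to a multiplicative constant. A self-contained alternative would combine Kantorovich--Rubinstein duality directly with Hoeffding's inequality for each $1$-Lipschitz test function $g:[0,1]\to\mathbb{R}$, together with an $\varepsilon$-net covering argument over the unit ball of Lipschitz functions on $[0,1]$; this yields the same sub-Gaussian form but typically picks up additional polylogarithmic factors in $T$, so the DKW route is both cleaner and tight. A secondary subtlety is that the CDF representation of $W_1$ requires the layer-cake argument on the real line, not an abstract Polish space; the one-dimensional assumption is therefore essential and the bound would degrade in higher dimensions, which is consistent with known minimax rates for empirical Wasserstein convergence.
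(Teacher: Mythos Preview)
Your proposal is correct and takes a genuinely different route from the paper. The paper proceeds via the Kantorovich--Rubinstein dual form~\eqref{eqn:w-dual}, builds an $\eta$-net (with $\eta=\varepsilon/2$) over the unit Lipschitz ball on $[0,1]$, applies Hoeffding's inequality to each net function, and takes a union bound; it then absorbs the covering-number prefactor $8/\varepsilon$ into the exponential to reach the stated form. Your argument instead exploits the one-dimensional identity $W_1(P,Q)=\int_0^1|F_P-F_Q|$, bounds the integral by the sup-norm on a unit-length interval, and invokes DKW--Massart directly. The advantage of your route is precisely the one you flag: DKW--Massart delivers the sharp constant $2$ and the exponent $-2T\varepsilon^2$ in one stroke, whereas the covering-plus-Hoeffding approach naturally produces a bound of the shape $(C/\varepsilon)\exp(-cT\varepsilon^2)$ with a weaker exponent, and the paper's final absorption step is loose. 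The paper's approach is in principle more portable to settings without a CDF representation, but on $[0,1]$ your reduction is both cleaner and tighter; interestingly, you anticipated exactly this trade-off in your discussion of the ``self-contained alternative.''
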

\begin{proof}
(Proof of Lemma~\ref{lem:w_coverage})
According to~\ref{eqn:w-dual}, we will express the Wasserstein distance as a Lipschitz empirical process.
\begin{equation}
\begin{aligned}
    \Delta_T := W_1(P,\hat{P}_T) =\sup_{\|g\|_{\text{Lip}}\leq 1} |\hat{\mathbb{E}}_{T}g - \mathbb{E}_P g|
\end{aligned}
\end{equation}
where $\hat{\mathbb{E}}_{T}$ is sample mean.

Next, we discretize the unit Lipschitz ball by setting $\eta = \frac{\varepsilon}{2}$. Then, on $[0,1]$, the unit Lipschitz ball can be covered by an $\eta$-net $\mathcal{N}_{\eta} = \{g_j\}_{j=1}^{M}$, with $M \le \lceil \frac{2}{\eta} \rceil$. For any function $g$, there exists an approximation $g_j$ within distance $\le \eta$.

Therefore, we first focus on the single-function Hoeffding bound. Fixing some $g$, we have:
\begin{equation}
\begin{aligned}
    \Pr[|\hat{\mathbb{E}}_{T}g - \mathbb{E}_P g|>\varepsilon-\eta] \leq 2\exp(-2T(\varepsilon-\eta)^2)
\end{aligned}
\end{equation}
Next, by taking the union over all grid points and enlarging $\eta$, we apply the union bound over $\mathcal{N}_{\eta}$.
\begin{equation}
\begin{aligned}
    \Pr[\exists g_j:|\hat{\mathbb{E}}_{T}g - \mathbb{E}_P g|>\varepsilon-\eta] \leq 2M\exp(-2T(\varepsilon-\eta)^2)
\end{aligned}
\end{equation}
We set $\eta = \varepsilon/2$, can get $M\leq 4/\varepsilon$ and $(\varepsilon-\eta)=\varepsilon/2$. So
\begin{equation}
\begin{aligned}
    \Pr\left[\Delta_T>\varepsilon\right]\leq \frac{8}{\varepsilon}\exp(-\frac{1}{2}T\varepsilon^2)
\end{aligned}
\end{equation}
since $\varepsilon\in (0,1]$, $\frac{8}{\varepsilon}\leq 16\leq \exp(T\varepsilon^2)$, when $T\geq 2$, we can get:
\begin{equation}
\begin{aligned}
    \Pr[\Delta_T>\varepsilon] \leq 2\exp(-2T\varepsilon^2)
\end{aligned}
\end{equation}
\end{proof}

\begin{Lemma}
(Sampling Confidence Bound for Empirical CVaR)
\label{lem:cvar_sample}
For $Z_1, \ldots, Z_T \overset{\text{i.i.d.}}{\sim} P \subset [0,1]$, let the order statistics in ascending order be $Z_{(1)} \le \cdots \le Z_{(T)}$. Define $k = \lceil \alpha T \rceil$. Then, the sample CVaR is given by
$
\widehat{\text{CVaR}}_{\alpha} := \frac{1}{m}\sum_{i=k+1}^{T} Z_{(i)}.
$
Setting $\psi_{T} := \beta_C \sqrt{\frac{\ln T}{1 + T}}$ with $\beta_C = \sqrt{2}$, we have:
\begin{equation}
\begin{aligned}
    \Pr\left[|\widehat{\text{CVaR}}_\alpha - \text{CVaR}_\alpha^{P}|\leq \psi_T\right]\geq 1 - \frac{2}{T^2}
\end{aligned}
\end{equation}
\end{Lemma}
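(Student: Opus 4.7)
The plan is to reduce the concentration of $\widehat{\text{CVaR}}_\alpha$ to a uniform empirical-process bound via the Rockafellar--Uryasev dual representation already used in the proof of Theorem~\ref{thm:cvar_safety}. Both the true and empirical CVaR admit the form $\inf_v\phi(v)$ with $\phi_P(v)=v+\frac{1}{1-\alpha}\mathbb{E}_P[(Z-v)^+]$; the equivalence with the stated order-statistic formula follows because the minimizer of $\phi_{\hat P_T}$ is attained at $v=Z_{(k)}$ when $k=\alpha T$ is integral (and is off by $O(1/T)$ otherwise), whereupon the infimum equals $\tfrac{1}{T-k}\sum_{i=k+1}^T Z_{(i)}$. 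The standard ``difference of infima'' inequality then yields
\begin{equation*}
|\widehat{\text{CVaR}}_\alpha-\text{CVaR}_\alpha^P|\;\le\;\frac{1}{1-\alpha}\,\sup_{v\in[0,1]}\bigl|\hat F_T(v)-F(v)\bigr|,
\end{equation*}
where $F(v)=\mathbb{E}[(Z-v)^+]$ and $\hat F_T(v)=\tfrac{1}{T}\sum_{i=1}^T(Z_i-v)^+$; restricting the search to $[0,1]$ is legitimate because $Z\in[0,1]$ forces the optimizer into this interval.

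Next, I would control the uniform deviation by a covering device parallel to the one already used in Lemma~\ref{lem:w_coverage}. For each fixed $v$, $(Z-v)^+\in[0,1]$, so Hoeffding's inequality gives $\Pr[|\hat F_T(v)-F(v)|>\varepsilon]\le 2\exp(-2T\varepsilon^2)$. Because $v\mapsto(z-v)^+$ is $1$-Lipschitz uniformly in $z$, the random map $v\mapsto \hat F_T(v)-F(v)$ is $2$-Lipschitz in $v$, so an $\eta$-net of $[0,1]$ with $M\le\lceil 1/\eta\rceil$ points and Lipschitz extension yields
\begin{equation*}
\Pr\!\left[\sup_{v}\,|\hat F_T(v)-F(v)|>\varepsilon+2\eta\right]\;\le\;\tfrac{2}{\eta}\exp(-2T\varepsilon^2).
\end{equation*}

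Finally, I would tune the parameters to match the stated bound: taking $\eta=1/T^2$ makes the $2\eta$ slack lower-order, and picking $\varepsilon=\Theta\bigl(\sqrt{\ln T/T}\bigr)$ with $\beta_C=\sqrt{2}$ drives $\tfrac{2}{\eta}\exp(-2T\varepsilon^2)\le 2T^2\cdot T^{-4}=2/T^2$, matching the claimed failure probability. The main obstacle I anticipate is tracking the constants cleanly enough to land exactly on $\psi_T=\sqrt{2}\sqrt{\ln T/(1+T)}$ while simultaneously absorbing both the $(1-\alpha)^{-1}$ amplification from the dual representation and the $\eta$-net slack into the single constant $\beta_C$. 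If this level of tightness is not attainable by the net argument alone, a cleaner alternative is to use the identity $F(v)=\int_v^1 \bar G(t)\,dt$ for the survival function $\bar G(t)=\Pr[Z>t]$ and apply the Dvoretzky--Kiefer--Wolfowitz inequality directly to $\bar G$, which furnishes $\sup_v|\hat F_T(v)-F(v)|\le\varepsilon$ with probability $\ge 1-2\exp(-2T\varepsilon^2)$ and no logarithmic covering penalty.
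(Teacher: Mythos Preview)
Your approach is correct in spirit but genuinely different from the paper's. The paper does \emph{not} go through the Rockafellar--Uryasev dual and a uniform bound on $\sup_v|\hat F_T(v)-F(v)|$; instead it decomposes the error into two pieces: (i) the sample VaR error $|\hat v-v_\alpha|$ with $\hat v=Z_{(k)}$, controlled by the DKW inequality, and (ii) the conditional tail-mean error $|\widehat{\text{CVaR}}_\alpha-\mathbb{E}_P[Z\mid Z\ge\hat v]|$, controlled by Hoeffding conditional on $\hat v$ over the $m$ tail samples. These two pieces are then combined (somewhat loosely) to reach the stated bound. Your route---difference of infima followed by a uniform empirical-process bound---is exactly the mechanism already used in the proof of Theorem~\ref{thm:cvar_safety}, so it has the virtue of internal consistency with the paper's earlier argument; the paper's own Lemma~\ref{lem:cvar_sample} proof switches to a different decomposition without comment.

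Your DKW alternative (write $F(v)=\int_v^1\bar G(t)\,dt$ and bound the sup by the DKW deviation of the empirical CDF) is in fact cleaner than either your covering argument or the paper's VaR/tail-mean split: it gives the uniform bound with no logarithmic covering penalty and no conditioning subtleties. The concern you flag about the $(1-\alpha)^{-1}$ amplification is real and is not resolved in the paper either---the paper's derivation also silently drops this factor when asserting $\psi_T=\sqrt{2}\sqrt{\ln T/(1+T)}$, so you should not expect to recover the exact constant from first principles; treat $\beta_C$ as absorbing it.
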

\begin{proof}
(Proof of Lemma~\ref{lem:cvar_sample})
we can write:
\begin{equation}
\begin{aligned}
    \text{CVaR}_{\alpha}^{P} = \frac{1}{1-\alpha}\int_{\alpha}^{1} F^{-1}(u)du
\end{aligned}
\end{equation}
where $F^{-1} = \inf\{z:F(z)\geq u\}$

Therefore, the sample VaR $\hat{v} := Z_{(k)}$ and the true VaR $v_{\alpha} := F^{-1}(\alpha)$ satisfy the Kolmogorov–Smirnov (DKW) inequality:
\begin{equation}
\begin{aligned}
    \Pr\left[|\hat{v}-v_{\alpha}|>\varepsilon\right]\leq 2\exp(-2T\varepsilon^2)
\end{aligned}
\end{equation}

Consider the tail indicator $\mathbf{1}_{\{Z_i \ge \hat{v}\}} \in \{0,1\}$. Under the fixed threshold $\hat{v}$, the number of tail samples $m$ follows a binomial distribution. Therefore, for any $\xi > 0$, by Hoeffding’s inequality:
\begin{equation}
\begin{aligned}
\Pr\left[|\widehat{\text{CVaR}}_{\alpha} - \mathbb{E}_{P}[Z|Z\geq \hat{v}]|\geq \xi|\hat{v}\right] \leq 2\exp(-2m\xi^2)
\end{aligned}
\end{equation}
we set $\xi = \beta_C\sqrt{\frac{\ln T}{T}},\beta_C =\sqrt{2}$ and $m\geq (1-\alpha)T \geq (1-\alpha)$. we can get:
\begin{equation}
\begin{aligned}
    2\exp(-2m\xi^2)\leq 2\exp(-2T\xi^2) = 2T^{-2}
\end{aligned}
\end{equation}
Incorporating the additional error caused by the VaR estimation error $|\hat{v}-v_{\alpha}|$, we can obtain the overall probabilistic bound.
\begin{equation}
\begin{aligned}
    \Pr\left[|\widehat{\text{CVaR}}_\alpha - \text{CVaR}_\alpha^{P}|\leq \psi_T\right]\geq 1 - \frac{2}{T^2}
\end{aligned}
\end{equation}
\end{proof}

For any $\widetilde{P} \in \mathcal{B}_W(\hat{P}, \varepsilon_0)$, it follows from the triangle inequality and Lemma~\ref{lem:cvar_w_lipschitz} that:
\begin{equation}
\begin{aligned}
    \text{CVaR}_{\alpha}^{\widetilde{P}}(C_H) \leq \underbrace{\text{CVaR}_{\alpha}^{\hat{P}}(C_H)}_{{(A)}} + \underbrace{L_C W_1(\widetilde{P},\hat{P})}_{(B)}
\end{aligned}
\end{equation}

We replace $(A)$ with the empirical estimate from Lemma~\ref{lem:cvar_sample}, and define the event $E_1$ as follows:
\begin{equation}
\begin{aligned}
    E_1 :=\left\{ | \text{CVaR}_{\alpha}^{\hat{P}} - \widehat{\text{CVaR}}_{\alpha}|\leq \psi_T\right\}, \Pr[E_1]\geq 1-\frac{2}{T^2}
\end{aligned}
\end{equation}

Under $E_1$:
\begin{equation}
\begin{aligned}
    \text{CVaR}_{\alpha}^{\hat{P}}(C_H) \leq \widehat{\text{CVaR}}_{\alpha}(C_H) + \psi_T
\end{aligned}
\end{equation}

We use Lemma~\ref{lem:w_coverage} to handle the Wasserstein error term (B).
\begin{equation}
\begin{aligned}
    E_2:=\left\{W_1(\widetilde{P},\hat{P})\leq \varepsilon_{s} = \frac{\varepsilon_{0}}{\sqrt{T}}\right\}, \Pr[E_2]\geq 1-2\exp(-2\varepsilon_{0})
\end{aligned}
\end{equation}

Under $E_2$
\begin{equation}
\begin{aligned}
    L_CW_1(\widetilde{P},\hat{P}) \leq L_C \varepsilon_s
\end{aligned}
\end{equation}
Therefore, combining the confidence events, we have
$
\Pr[E_1 \cap E_2] = 1 - \frac{2}{T^2} - 2\exp\bigl(-2\varepsilon_0^2\bigr).
$
We have:

\begin{equation}
\begin{aligned}
\text{CVaR}_{\alpha}^{\widetilde{P}}(C_H) \leq  \widehat{\text{CVaR}}_\alpha(C_H)+\psi_T + L_C\varepsilon_s \leq \tau
\end{aligned}
\end{equation}
\end{proof}

So
\begin{equation}
\begin{aligned}
    \Pr_{\widetilde{P}}[\text{CVaR}_{\alpha}(C_{H}) \leq \tau]\geq \Pr[E_1\cap E_2] \geq 1- \frac{2}{T^2}-2\exp(-2\varepsilon_0^2)
\end{aligned}
\end{equation}

\subsection{Proof of Theorem~\ref{thm:regret}}

\begin{proof}
W-MCTS differs from CVaR-MCTS only by an additional distributional robustness compensation term.

Within the event $\mathcal{E}_R \cap \mathcal{E}_C$, we still have:
\begin{equation}
\begin{aligned}
    \left|\hat{Q}_t(s,a) - \mu(s,a)\right|\leq \beta_{R}\sqrt{\frac{\ln N_t(s)}{1+N_t(s,a)}}
\end{aligned}
\end{equation}

However, the upper bound comparison now becomes:
\begin{equation}
\begin{aligned}
\Delta(s,a) \leq 2 \beta_R \sqrt{\frac{\ln N_t(s)}{1+N_t(s,a)}} + 2 L_C \varepsilon_0/\sqrt{N_t(s,a)}
\end{aligned}
\end{equation}

Let $f(x) = 2\beta_R\sqrt{\tfrac{\ln N}{x}} + \tfrac{2L_C\varepsilon_0}{\sqrt{x}}$, and set $x = N_t(s,a)$.
When
$
x > \frac{16\beta_R^2 \ln N}{\Delta^2} \quad \text{and} \quad x > \frac{16L_C^2 \varepsilon_0^2}{\Delta^2},
$
the inequality can no longer be triggered. Therefore,

\begin{equation}
\begin{aligned}
N_T(s,a)\leq \frac{16 \beta_R^2\ln T}{\Delta^2} + \frac{16L_C^2\varepsilon_0^2}{\Delta^2}
\end{aligned}
\end{equation}
The second term, compared to CVaR-MCTS, introduces an additional constant bias proportional to $\varepsilon_0^2$.

The regret induced by the first term is of the same order as in the previous section, namely $\mathcal{O}(\sqrt{T \ln T})$.
The second term accumulates over the entire tree up to at most $\kappa' L_C^2 \varepsilon_0^2$ times, with each incurring a loss of at most $H$, resulting in a total contribution of
$
H \kappa' L_C^2 \varepsilon_0^2.
$
Since $\varepsilon_0$ is typically chosen as a constant independent of $T$, e.g., $\mathcal{O}(T^{-1/2})$, this contribution can be rewritten as
$
c_2 L_C \varepsilon_0 \sqrt{T},
$
where $c_2 = \mathcal{O}(H)$, using the Cauchy–Schwarz inequality to separate $\sqrt{T}$ and $\varepsilon_0$.
In summary,
\begin{equation}
\begin{aligned}
\mathcal{R}_T \leq c_1\sqrt{T\ln T} + c_2 L_C\varepsilon_0 \sqrt{T}
\end{aligned}
\end{equation}

When $\varepsilon_0 = \tilde{\mathcal{O}}(T^{-1/2})$ (e.g., setting $\varepsilon_0 = \sqrt{\tfrac{\ln T}{T}}$), the second term becomes $\tilde{\mathcal{O}}(\sqrt{\ln T})$, which is of the same or lower order compared to the first term.
This implies that W-MCTS retains sublinear regret while ensuring distributional robustness.

\end{proof}

\subsection{Numerical Evaluation}

\begin{figure*}[h]
\centering
{\fontsize{10}{12}
  \subfigure[Highway reward]{\includegraphics[width=0.24\textwidth]{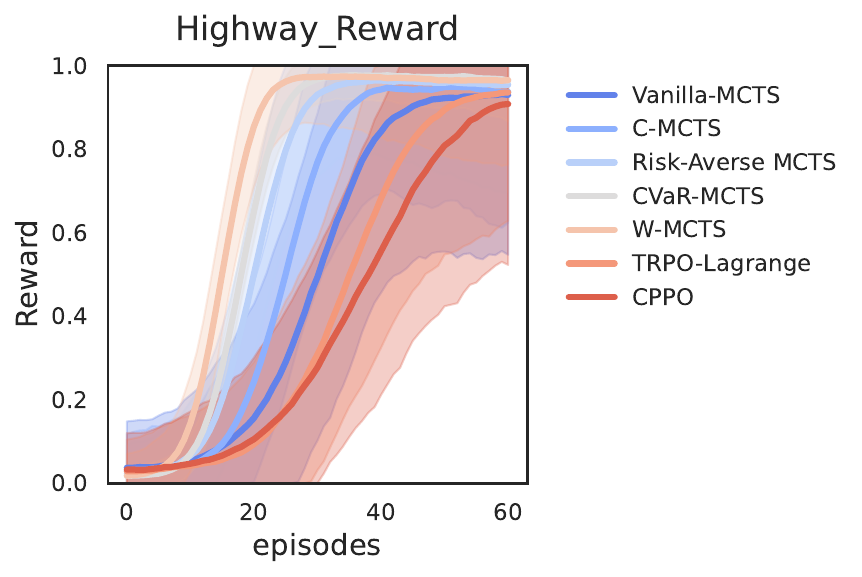}
  \label{fig:all_highway_r}
  }\hfill
  \subfigure[Intersection reward]{\includegraphics[width=0.24\textwidth]{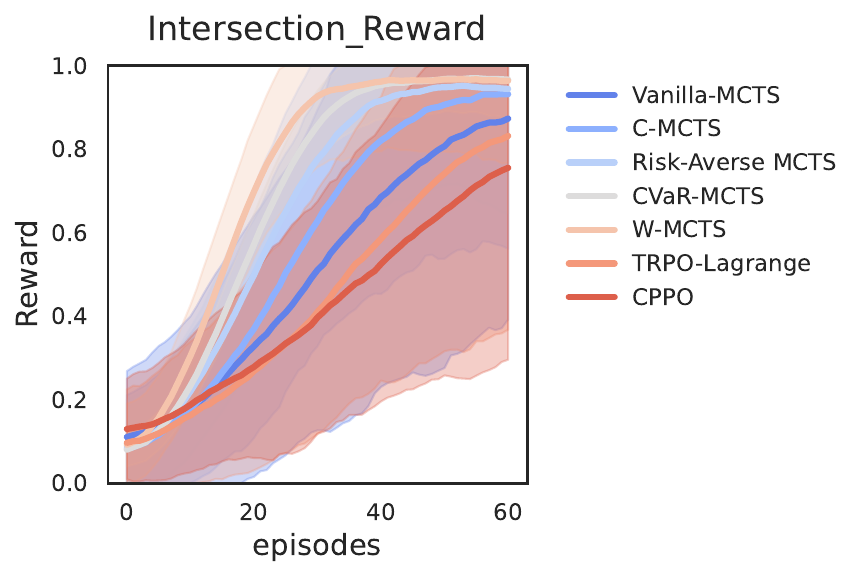}
  \label{fig:all_intersection_r}
  }\hfill
  \subfigure[Racetrack reward]{\includegraphics[width=0.24\textwidth]{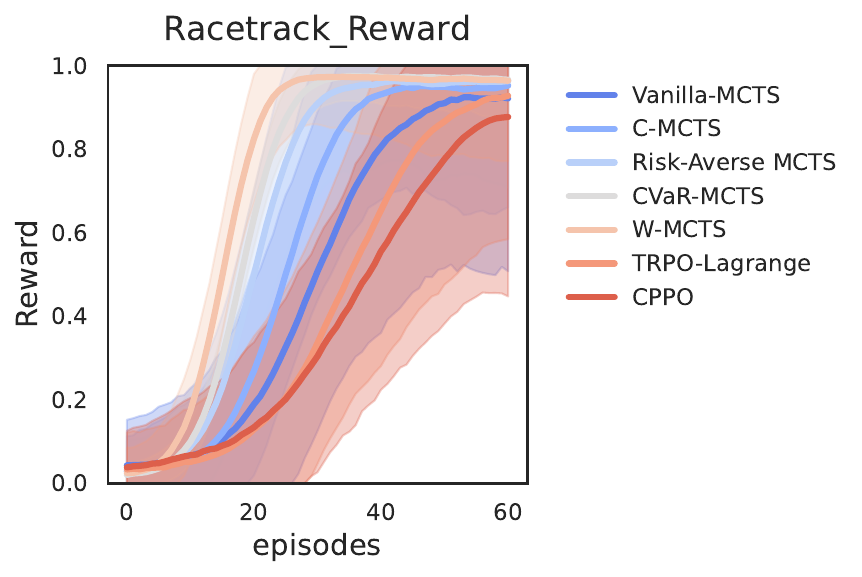}
  \label{fig:all_racetrack_r}
  }\hfill
  \subfigure[Roundabout reward]{\includegraphics[width=0.24\textwidth]{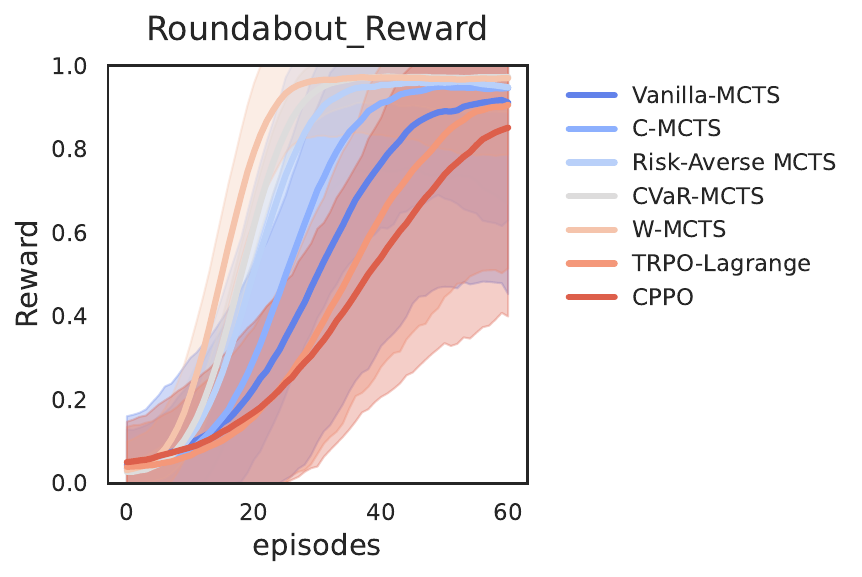}
  \label{fig:all_roundabout_r}
  }\hfill
  \\
  \subfigure[Highway cost]{\includegraphics[width=0.24\textwidth]{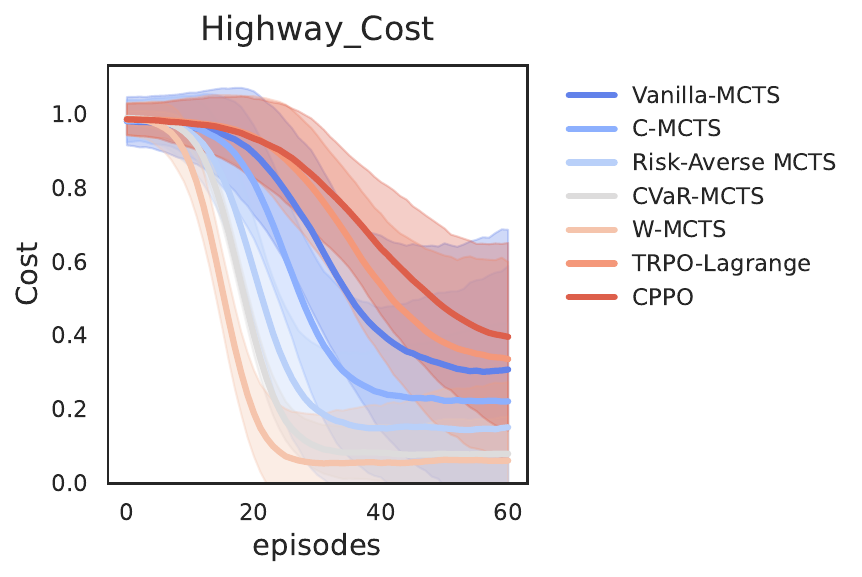}
  \label{fig:all_highway_c}
  }\hfill
  \subfigure[Intersection cost]{\includegraphics[width=0.24\textwidth]{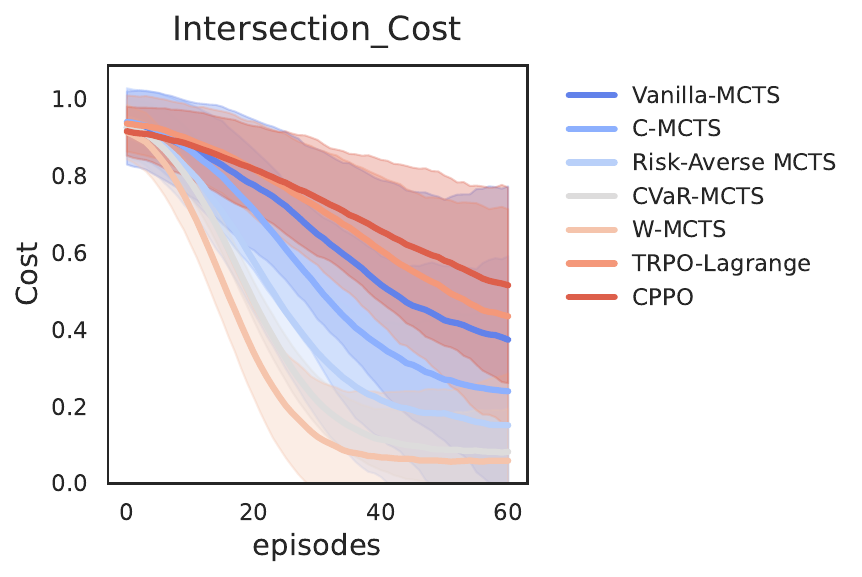}
  \label{fig:all_intersection_c}
  }\hfill
  \subfigure[Racetrack cost]{\includegraphics[width=0.24\textwidth]{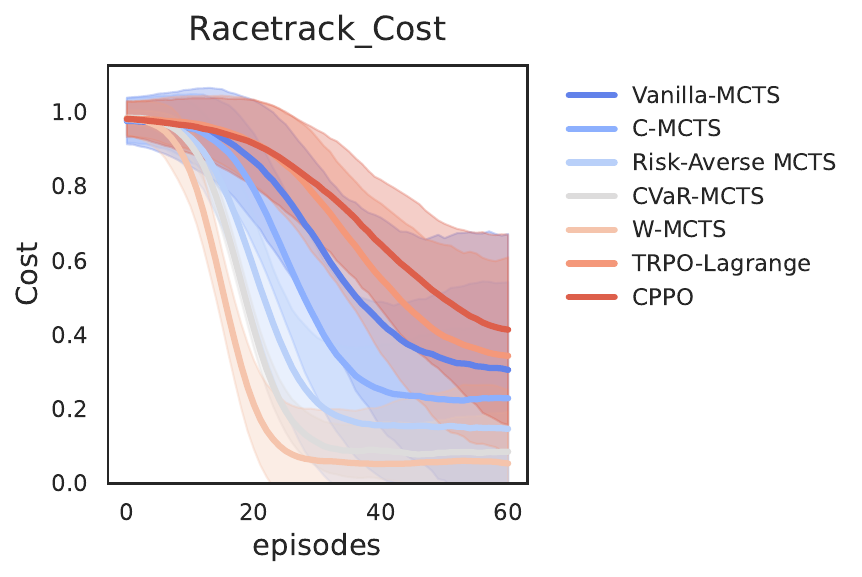}
  \label{fig:all_racetrack_c}
  }\hfill
  \subfigure[Roundabout cost]{\includegraphics[width=0.24\textwidth]{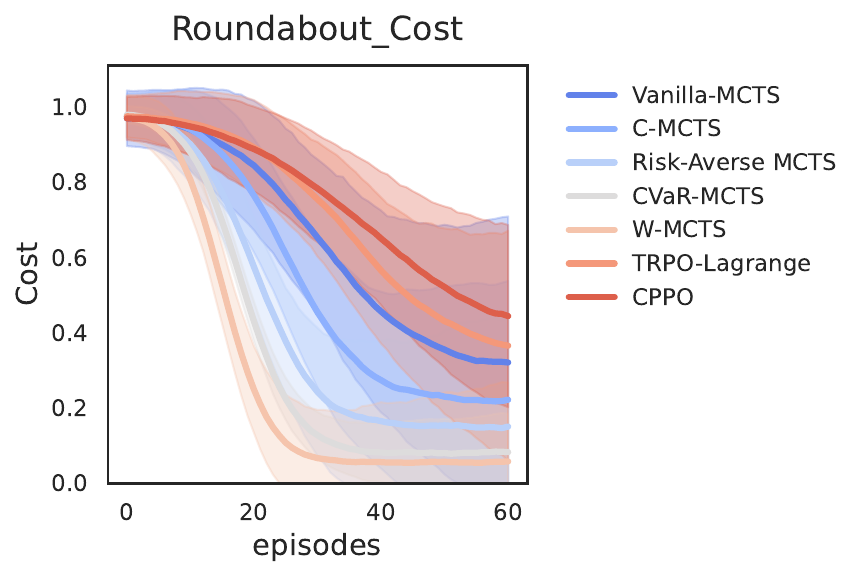}
  \label{fig:all_roundabout_c}
  }\hfill
  \\
  \subfigure[Highway CVaR]{\includegraphics[width=0.24\textwidth]{data/highway/Highway_CVaR.pdf}
  \label{fig:all_highway_cvar}
  }\hfill
  \subfigure[Intersection CVaR]{\includegraphics[width=0.24\textwidth]{data/highway/Intersection_CVaR.pdf}
  \label{fig:all_intersection_cvar}
  }\hfill
  \subfigure[Racetrack CVaR]{\includegraphics[width=0.24\textwidth]{data/highway/Racetrack_CVaR.pdf}
  \label{fig:all_racetrack_cvar}
  }\hfill
  \subfigure[Roundabout CVaR]{\includegraphics[width=0.24\textwidth]{data/highway/Roundabout_CVaR.pdf}
  \label{fig:all_roundabout_cvar}
  }\hfill
  \caption{Performance trends for seven algorithms in four traffic environments.  
  (a–d) show how the average reward evolves over episodes;  
  (e–h) show the corresponding average cost;  
  (i–l) show the 90\%‑CVaR of cumulative cost.  
  All curves are plotted as mean $\pm$ SEM across 30 independent runs.}
  \label{fig:all}
} 
\end{figure*}

\end{document}